\pdfoutput=1

\documentclass[11pt]{article}

\usepackage{acl}

\usepackage{times}
\usepackage{latexsym}

\usepackage[T1]{fontenc}

\usepackage[utf8]{inputenc}

\usepackage{microtype}

\usepackage{inconsolata}

\usepackage{graphicx}
\usepackage{makecell}
\usepackage{amsmath}
\usepackage{amsfonts}
\usepackage{booktabs}
\usepackage{multirow}

\usepackage{tabularx}
\usepackage{arydshln}

\usepackage{enumitem}
\usepackage{float} 
\usepackage{amssymb}

\newcommand{\bW}{\mathbf{W}}
\newtheorem{prop}{Proposition} 
\newtheorem{lemma}{Lemma} 
\newtheorem{theorem}{Theorem}

\title{CAGE: Coherence-Aware Graph Encoding for Retrieval-Augmented Generation}

\author{
\textbf{Tong Qi\textsuperscript{1}},
\textbf{Jingyu Wu\textsuperscript{1}},
\textbf{Youbing Yin\textsuperscript{1}},
\textbf{Spencer Hong\textsuperscript{2}},
\textbf{Daben Liu\textsuperscript{1}},
\textbf{Erin Babinsky\textsuperscript{1}} \\
\texttt{\{tong.qi, jingyu.wu, youbing.yin, daben.liu, erin.babinsky\}@capitalone.com\textsuperscript{1}} \\
\texttt{spencer@generalintelligencecompany.com\textsuperscript{2}}
}

\begin{document}
\maketitle
\begin{abstract}

Traditional Retrieval-Augmented Generation (RAG) systems score each passage independently against the query, assembling context sets that may be individually relevant yet collectively incoherent. We introduce \textbf{Coherence-Aware Graph Encoding (CAGE)}, a reranking framework that models "between-chunk coherence" across four dimensions: \textbf{Intra-Domain Relevance, Noise Resistance, Informational Bonding, and Factual Consistency}. Our pipeline transforms retrieved passages into directed heterogeneous entity graphs, amplifies factual anchors via min-out-degree reweighting, encodes structural patterns through a Relational Graph Convolutional Network, and fuses inter-chunk coherence with query relevance for final ranking. Evaluated across four multi-hop benchmarks, CAGE matches or outperforms strong baselines including monoT5 in Recall@5 on bridge-dominated datasets and consistently improves downstream Exact Match, demonstrating that structurally coherent context yields more precise answers even when retrieval recall is comparable or lower.

\end{abstract}

\section{Introduction}
Retrieval-Augmented Generation (RAG) has become a standard paradigm for grounding large
language model (LLM) outputs in external knowledge. In the canonical setup, a retriever
identifies the top-k passages most relevant to a query, and those passages are concatenated
as context for the LLM. The quality of this retrieved set therefore acts as a hard ceiling on
downstream answer quality: no matter how capable the generator, it cannot synthesize a
correct answer from an incoherent or contradictory evidence base.

Current retrieval pipelines address relevance through two stages: an initial retriever that ranks candidates by query chunk similarity, followed by a reranker that refines those scores using more expressive query–chunk interaction models \cite{nogueira2020document,santhanam2022colbertv2}. Despite this increased expressiveness, both stages share a common objective: scoring each chunk individually against the query. Neither explicitly models the structural dependencies among the retrieved chunks themselves. As a result, the top-k set is assembled as a collection of independently high-scoring fragments rather than as a collectively coherent unit of evidence. Consider a query such as \textit{"What is the current treatment procedure for Type 2 diabetes?"} A retrieval pipeline may promote: (1) a 2010 passage recommending early insulin therapy; (2) a 2023 passage on GLP-1 receptor agonists; (3) a passage on Type 1 diabetes as a hereditary condition; and (4) a passage disputing the medical benefit of GLP-1 treatments. Each chunk scores well against the query individually; together, they form a contradictory and temporally fragmented context that actively misleads the generator. This failure mode is not an edge case; in our experiments across four multi-hop benchmarks, we observe that gold supporting chunks are frequently displaced by hard distractors that outscore them on individual query-relevance metrics (see section \ref{subsec:retrievalperformance}).

We distinguish this problem sharply from related notions of RAG quality. Existing evaluation frameworks such as RAGAS  \cite{es2024ragas} and TruLens \cite{trulens} measure generation-side faithfulness and hallucination, properties of the produced answer relative to the retrieved context. Graph-based RAG systems such as GraphRAG \cite{edge2024local} and GNN-RAG \cite{mavromatis2025gnn} use graph structures for information navigation, treating the graph as a retrieval map rather than a coherence validator. 
Our work addresses a different, earlier-stage problem: between-chunk coherence, defined as the structural and logical integrity of the retrieved chunk set prior to answer generation. A retrieved set can exhibit high per-chunk relevance scores and produce a fluent generated answer, yet still be incoherent, if the chunks share keyword overlap but contradict one another on the target facts, or fail to form a connected reasoning chain, because the generator's precision is compromised regardless of its capability.

We argue that between-chunk coherence is not a monolithic property. In Section \ref{sec:def} we formalize a decomposition into four distinct dimensions, Intra-domain Relevance (entity-level topical alignment beyond surface keyword overlap), Noise Resistance (suppression of distractor chunks that share query keywords but lack inferential utility), Informational Bonding (structural connectivity enabling multi-hop traversal through bridge entities), and Factual Consistency (logical compatibility across the relational assertions of retrieved chunks). Together, these four dimensions define what it means for a retrieved set to constitute a coherent evidence base, and they motivate each component of the framework we introduce.

To capture these dimensions jointly, we propose Coherence-Aware Graph Encoding (CAGE), a graph-based reranking framework that transforms retrieved chunks into directed heterogeneous entity graphs, encodes them via Relational Graph Convolutional Networks (R-GCN; \citet{schlichtkrull2018modeling}), and scores each chunk using a fused objective that combines query relevance with inter-chunk structural coherence. Graph reweighting via min-out-degree node emphasis amplifies the contribution of specific, semantically grounded entities (the natural anchors of factual claims) while the R-GCN's relation-type-specific weight matrices allow the model to penalize structural contradictions and reward inferential bridging across documents.
Evaluated across four multi-hop benchmarks, CAGE outperforms strong retrieval baselines in Recall@5 and consistently achieves superior Exact Match performance in downstream generation, demonstrating that promoting structurally coherent evidence translates directly into more precise answers.

\section{Related Work}
\subsection{Coherence in Information Retrieval}
Text coherence has traditionally been studied as an intrinsic property of single documents — the logical connectivity between constituent sentences. Foundational work used sentence ordering and document discrimination tasks \citet{liu2020evaluating,mesgar2016lexical}, while graph-based approaches tracked entity transitions across adjacent sentences to compute local coherence \citet{guinaudeau2013graph,mesgar2014normalized, kramov2020evaluating}. These models share a key insight: coherence is relational, emerging from structural dependencies rather than from sentences in isolation.

However, they assume continuous, single-author discourse. The retrieval setting violates this: passages come from independent sources with no shared discourse plan. Current RAG evaluation frameworks sidestep this challenge entirely. RAGAS \citet{es2024ragas} and TruLens \citet{trulens} measure generation-side properties: faithfulness, hallucination, and coherence, where coherence evaluates the logical consistency and flow of the AI-generated response, not the structural integrity of the retrieved chunks that produced it. NLI-based contradiction detectors \cite{huang2025survey,ming2024faitheval} operate at the claim level but miss higher-order structure: whether passages form connected reasoning chains or collectively support a single answer. We formalize between-chunk coherence, the structural and logical integrity of a retrieved set, as a distinct axis orthogonal to both per-chunk relevance and downstream faithfulness.

\subsection{Coherence Gap in Retrieval Augmented Generation}
We first define the baseline our work builds upon. Vanilla RAG denotes the standard retrieve-then-generate pipeline: all-MiniLM-L6-v2 \cite{reimers2019sentence} encodes queries and passages into a shared embedding space, retrieves the top-k by cosine similarity, and concatenates them as context for the generator. No inter-passage signal is considered. CAGE applies graph reranking to this same retrieved set, isolating the contribution of coherence-aware scoring.

Recent work has moved beyond flat retrieval toward graph-based paradigms. GraphRAG \cite{edge2024local} builds hierarchical knowledge graphs for query-focused summarization. HippoRAG \cite{gutierrez2024hipporag} enables associative multi-hop retrieval via Personalized PageRank. GNN-RAG \cite{mavromatis2025gnn} trains GNNs over knowledge bases to identify reasoning paths. HyperGraphRAG \cite{luo2025hypergraphrag} introduces hypergraph representations capturing many-to-many entity–document relationships. G-RAG \cite{dong2024don} proposes graph-based reranking to leverage structural connectivity among candidate passages.

Despite their sophistication, these systems share a common philosophy: the graph serves as a navigation map for locating relevant passages, not as a validator of coherence among them. GraphRAG improves coverage, but does not assess whether the retrieved passages are logically coherent. HippoRAG increases multi-hop recall without guaranteeing internal consistency. GNN-RAG guides which nodes to visit, yet assembles the final set without a coherence objective. HyperGraphRAG captures group associations but remains agnostic to contradictions within retrieved members. 
While G-RAG models local passage connections during reranking, its approach evaluates pairwise graph topological ties rather than validating set-level coherence evaluation. 
In each case, graph topology is exploited for retrieval reach or pairwise affinity rather than set-level evidence integrity.

This leaves a persistent gap in retrieval-augmented generation: existing graph-based reranking and set-selection paradigms remain largely coherence-agnostic at the set level. Because prior methods evaluate candidates independently or rely on localized topological affinity, nothing in their design prevents the final context set from harboring mutual contradictions, lacking inferential bridges, or accumulating uninformative distractors.
To bridge this gap, we introduce CAGE, a post-retrieval reranking framework operating prior to answer generation. Rather than utilizing graph representations merely for information navigation, our primary contribution is establishing between-chunk coherence as an explicit optimization objective for scoring retrieved candidate sets. By repurposing graph topology into a mechanism for set-level evidence validation, CAGE ensures that concatenated contexts form a structurally integrable foundation before answer synthesis.

\begin{figure*}[h!]
    \centering 
    \includegraphics[width=\textwidth]{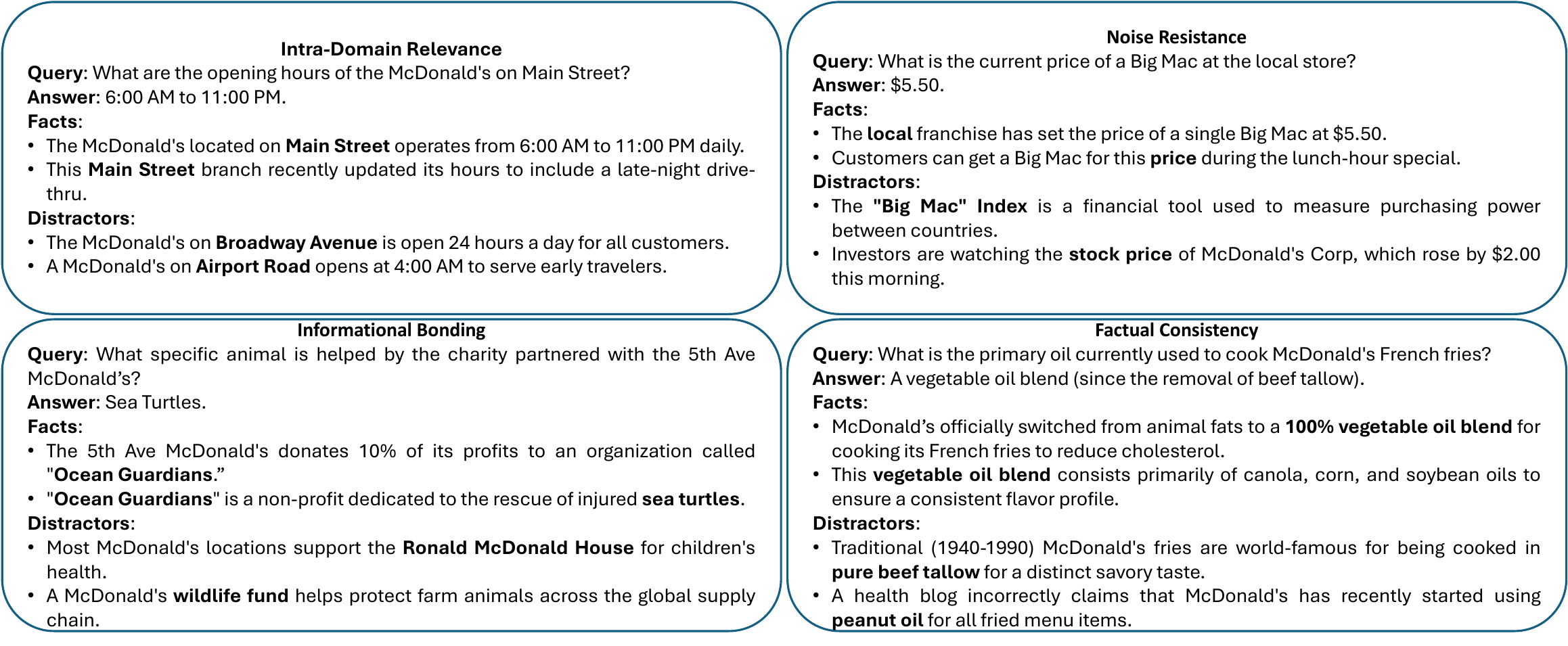}
    \caption{Examples of the four coherence dimensions. Intra-domain relevance (top-left): distractors share the entity type (McDonald's) but anchor to a different referent (Broadway vs. Main Street). Noise resistance (top-right): cross-domain keyword overlap ("Big Mac" as food vs. financial index) produces topologically disconnected distractors. Informational bonding (bottom-left): a bridge entity ("Ocean Guardians") is required to connect disjoint facts into a traversable reasoning chain. Factual consistency (bottom-right): mutually exclusive claims (vegetable oil vs. beef tallow) violate logical integrability of the retrieved set.}
    \label{fig:dimexample}
\end{figure*}

\section{Background Definition and Theory}
\label{sec:def}
The preceding sections identify two gaps: (1) coherence remains formally undefined for sets of independently retrieved passages, and (2) no existing graph-based RAG system exploits relational structure as a coherence signal. We address both by decomposing between-chunk coherence into four measurable dimensions and proving that Relational Graph Convolutional Networks are expressively sufficient to distinguish coherent evidence sets from incoherent ones.

\subsection{Multiple-Dimensions of Coherence in Retrieval}
We formalize between-chunk coherence as a composite property over a retrieved set $({C_1, \ldots, C_k}) \in \mathcal{D}$ with respect to query $q$. We decompose it into four dimensions and the example is shown in Figure \ref{fig:dimexample}:
\begin{itemize}[leftmargin=*]
    \vspace{-0.2cm}\item \textit{\textbf{Intra-Domain Relevance}}: Entity-level referential alignment among co-retrieved passages within the same domain. Standard retrievers may surface chunks that share domain vocabulary with the query yet anchor to different real-world referents (e.g., two McDonald's branches on different streets). Intra-domain relevance requires that passages ground to a shared referential context, not merely a shared topic. 
    \vspace{-0.2cm}\item \textit{\textbf{Noise Resistance}}: Suppression of cross-domain distractor chunks that exhibit high query similarity but contribute no inferential utility. These arise from keyword overlap across unrelated domains (e.g., "Big Mac" as a food item vs. a financial index) and act as isolated informational islands, topologically disconnected from the evidence backbone.
    \vspace{-0.2cm}\item \textit{\textbf{Informational Bonding}}: The presence of bridge entities enabling multi-hop traversal across passages. A coherent set must contain a structurally continuous path from query entities to the target answer; without such bridges, the set remains disconnected facts rather than a traversable reasoning chain.
    \vspace{-0.2cm}\item \textit{\textbf{Factual Consistency}}: Logical compatibility among the relational assertions of co-retrieved passages. Multi-source retrieval can introduce mutually exclusive claims (e.g., conflicting dates, contradictory causal relations). A coherent set must be logically integrable, its facts must not invalidate one another during synthesis.
\end{itemize}

These dimensions are complementary and non-overlapping: intra-domain relevance and noise resistance partition the relevance space by scope (within-domain disambiguation vs. cross-domain filtering), while informational bonding and factual consistency address structural connectivity and logical harmony respectively. Qualitative examples appear in Appendix \ref{app:example} (Figures \ref{fig:dim1example}-\ref{fig:dim4example}).

\subsection{Theoretical Foundation of Relational Coherence}

We establish that R-GCNs are expressively sufficient to capture all four coherence dimensions defined above. Specifically, we show (1) that the R-GCN update rule is injective over relational multisets, matching the discriminative power of the Relational Weisfeiler-Lehman algorithm, and (2) that each coherence dimension corresponds to a distinguishable structural property in the resulting embedding space. See Appendix B.\ref{lemma1}-B.\ref{lemma4} for Propositions/Lemmas 1-4 individually.

\textbf{Notation:} $G = (V, E, R)$: multi-relational graph with nodes $V$, edges $E$, relation types $R$. $h_i^{(l)}$: embedding of node $i$ at layer $l$ with total $L$ layers. $\bW_r^{(l)}$: relation-specific weight matrix. $\Phi: G \to \mathcal{E}$: graph encoding function. $\sigma$: non-linear activation.

\begin{prop}{Relational WL-Equivalence}
\label{prop}
Let $G = (V, E, R)$ be a multi-relational graph, and consider the unnormalized R-GCN update
  \[
  h_i^{(l+1)} = \sigma \left( \sum_{r \in \mathcal{R}} \bW_r^{(l)} \sum_{j \in \mathcal{N}_i^r} h_j^{(l)} + \bW_0^{(l)} h_i^{(l)} \right).
  \]
An $L$-layer R-GCN under this update, with relation-specific weight matrices $\{\bW_r^{(l)}\}$ of sufficient rank, is as discriminative as the $L$-step Relational Weisfeiler-Lehman algorithm: the update is injective over relational multisets, so $h_u^{(L)} = h_v^{(L)}$ if and only if nodes $u$ and $v$ have isomorphic $L$-hop relational neighborhoods. This guarantees that, under sum aggregation, structurally distinct chunk graphs, differing in entity relations, connectivity, or directionality, receive distinct embeddings, a prerequisite for any coherence scoring function that must discriminate coherent sets from incoherent ones.
\end{prop}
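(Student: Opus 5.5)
We argue in two directions, following the template of Xu et al.\ (GIN) and Morris et al.\ adapted to relational multisets. Fix the Relational WL (RWL) refinement: colours $c^{(0)}_i$ come from initial features, and $c^{(l+1)}_i = \mathrm{hash}\big(c^{(l)}_i, \{\!\{ (r, c^{(l)}_j) : j \in \mathcal{N}_i^r, r \in \mathcal{R}\}\!\}\big)$. Directed edges are handled by treating inverse relations as separate types in $\mathcal{R}$, so directionality is already part of the relation index. Both directions of the proposition will be proved by induction on $l$. The statements to be compared are $c^{(l)}_u = c^{(l)}_v$ and $h^{(l)}_u = h^{(l)}_v$.

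\emph{Upper bound (R-GCN is no more discriminative than RWL).} We show by induction that $c^{(l)}_u = c^{(l)}_v$ implies $h^{(l)}_u = h^{(l)}_v$. At $l=0$ this holds by assumption on the features. For the inductive step, equal RWL colours give equal own colour and equal multisets $\{\!\{c^{(l)}_j : j\in\mathcal{N}^r_u\}\!\}$ for every $r$. By the inductive hypothesis the corresponding multisets of embeddings coincide, hence so do the per-relation sums $\sum_{j\in\mathcal{N}_u^r} h_j^{(l)}$. The update is a deterministic function of these sums and of $h_u^{(l)}$, which gives the claim. This direction needs no rank assumption.

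\emph{Lower bound (injectivity).} We must construct $\{\bW_r^{(l)}\}$ and $\sigma$ so that distinct RWL colours yield distinct embeddings. The plan has three steps.

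\begin{enumerate}
\item Since the graph (or any finite family of chunk graphs) has finitely many nodes and colours, we may assume inductively that $h^{(l)}$ takes values in a finite set $X\subset\mathbb{R}^d$ and is injective on colours.

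\item We choose the weight matrices to separate the relation channels. Take $\bW_r^{(l)}$ to embed into pairwise disjoint coordinate blocks, one block per $r\in\mathcal{R}$ plus one for $\bW_0^{(l)}$. This is where ``sufficient rank'' enters: we need full column rank and width $(|\mathcal{R}|+1)d$. The pre-activation then becomes the concatenation of $h^{(l)}_i$ with the per-relation neighbour sums.

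\item It remains that sum aggregation be injective on bounded-size multisets over $X$. Following the GIN lemma, we first apply an injective map $f:X\to\mathbb{R}^{d'}$, for example one-hot encoding of $X$, or a base-$N$ encoding with $N$ exceeding the maximum degree. Under such a map, multiset sums are injective because they record counts. Since the update applies a linear map before summation, this $f$ must be absorbed into the previous layer's $\sigma$ and weights. Concretely, we arrange that the layer-$l$ output already lies in a one-hot or count-encoding form. This is possible because a single layer followed by a suitable $\sigma$ (or an MLP-style $\sigma$, as is standard) can realise any function on a finite set.
\end{enumerate}

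Given this, the pre-activation determines the RWL tuple injectively, and $\sigma$ can be chosen to be injective on the resulting finite set. This closes the induction. The statement about $L$-hop neighbourhoods then follows from the standard identification of $L$-round RWL colours with isomorphism types of $L$-hop unfolding trees, which is the sense in which ``isomorphic $L$-hop relational neighborhoods'' should be read.

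The main obstacle is step 3. A single linear map followed by a fixed pointwise $\sigma$ is not, in general, injective on multisets. For instance, $\{\!\{a,a\}\!\}$ and $\{\!\{2a\}\!\}$ collide if $X$ contains both $a$ and $2a$. I would therefore make the existence claim precise: weights and activation exist, for a given finite class of graphs, that achieve injectivity. I would also remark that this is the sense in which ``sufficient rank'' suffices. The one-hot or count encoding on finite $X$ is the cleanest route. A fallback is a generic-weights argument: the collision set is a finite union of proper algebraic subsets, so almost all weights separate the finitely many relevant multisets.
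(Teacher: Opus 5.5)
Your proposal uses the same skeleton as the paper's proof. The paper cites Xu et al.\ for sum-injectivity, observes that distinct $\bW_r$ give $|\mathcal{R}|$ independent linear channels, and reads off the ``iff'' from ``sufficient rank''. Your refinements repair two real defects the paper never addresses.
\begin{itemize}
\item Since $\bW_r$ acts only on $\sum_{j\in\mathcal{N}_i^r} h_j^{(l)}$, no rank condition can undo a collision of the raw sums. GIN's encoding $f$ must therefore already be present in the summands, and only an existence claim, over weights \emph{and} activation, is provable.
\item WL-type models separate $L$-hop unfolding trees, not induced $L$-hop neighbourhoods. A node on a triangle and a node on a hexagon, with uniform features, get equal colours although their $1$-hop neighbourhoods differ. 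So the ``iff'' must be read your way.
\end{itemize}
Your upper-bound induction is correct as written.

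The remaining gap is the base case of your lower bound. Step 3 makes the layer-$l$ outputs sum-injective by absorbing $f$ into layer $l-1$. But $h^{(0)}$ is the raw input, and there is no earlier layer to absorb anything into. Your own collision example, placed at layer $0$, is exactly the problem. Take two nodes with equal own features, one with a single $r$-neighbour of feature $2a$ and the other with two $r$-neighbours of feature $a$. They have identical pre-activations for \emph{every} choice of weights and $\sigma$, yet RWL separates them. Scaled feature pairs $x, 2x$ are precisely what the paper's $\beta=2$ reweighting introduces.

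You therefore need GIN's hypothesis that input features are one-hot (or at least have injective bounded multiset sums), or an added input encoder. The statement does not supply this. The same example also defeats your generic-weights fallback, because there the collision set is the whole parameter space, not a proper subset. Moreover, with the paper's bias-free ReLU, positive homogeneity maps pre-activations $z, 2z$ to outputs $o, 2o$, so such collisions propagate to later layers. Your claim that ``a single layer with a suitable $\sigma$ realises any function on a finite set'' holds only for a freely chosen, non-standard activation or an MLP update. State the corrected existence claim with these input and architecture assumptions explicit.

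Finally, neither your argument nor the paper's delivers the closing clause that all structurally distinct chunk graphs receive distinct embeddings. A hexagon and two disjoint triangles, with one symmetric relation and uniform features, get identical multisets of node embeddings and hence identical pooled embeddings.
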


Building upon Proposition 1, we posit that the graph embedding manifold acts as a structural validator for retrieval coherence.
\begin{theorem}{Coherence Separability} 
\label{thm1}
Let $\Phi: \mathcal{G} \to \mathbb{R}^d$ be an $L$-layer R-GCN with injective aggregation (Proposition~\ref{prop}, exact under unnormalized sum aggregation and approximate under our degree-normalized implementation per the condition in Appendix~B.1) followed by global sum pooling: $\Phi(G) = \sum_{i \in V} h_i^{(L)}$. Let $S(\Phi(G_i), \Phi(G_j)) = \cos(\Phi(G_i), \Phi(G_j))$. If two passage graphs $G_i, G_j$ share entities and compatible relation types -- either directly, or transitively through a bridge entity relationally enriched within each graph's own 1-hop neighborhood (Lemma \ref{lemma3}) and neither contains contradictory relational patterns relative to the other (i.e., exhibit structural coherence), then:
    \[
    S(\Phi(G_i), \Phi(G_j)) > S(\Phi(G_i), \Phi(G_k))
    \]
for any $G_k$ that shares no such entities or bridge structure with $G_i$, or contains contradictory relational patterns.
\end{theorem}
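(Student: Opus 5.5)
The plan is to reduce the cosine comparison to a comparison of shared versus unshared ``structural mass'' in the pooled embeddings. This needs one modeling assumption, which I state explicitly: the weight matrices $\{\bW_r^{(l)}\}$ are learned so that the final node embeddings are approximately orthogonal across distinct relational-WL classes. Concretely, each WL color $c$ is mapped to a direction $e_c$ with $\langle e_c, e_{c'}\rangle \approx \delta_{cc'}$, for instance by a final injective layer that sends distinct colors to near-orthogonal codes. Proposition~\ref{prop} makes this possible: two nodes receive the same embedding exactly when their $L$-hop relational neighborhoods are isomorphic, so colors are well defined and embeddings factor through them. Under global sum pooling we then get
\[
\Phi(G)=\sum_{c} n_c(G)\, e_c ,
\]
where $n_c(G)$ counts the nodes of color $c$. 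The cosine similarity reduces to a normalized inner product of color-count histograms, $S(\Phi(G_i),\Phi(G_j)) \approx \langle n(G_i), n(G_j)\rangle / (\|n(G_i)\|\,\|n(G_j)\|)$.

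The second step translates each hypothesis into a statement about shared colors.
\begin{itemize}
\item \emph{Shared entities with compatible relation types.} Such an entity has the same typed, directed $L$-hop neighborhood in both graphs. It therefore contributes the same color to both histograms, giving a strictly positive overlap term.
\item \emph{Bridge entities.} I invoke Lemma~\ref{lemma3}: the bridge is relationally enriched in each graph's own 1-hop neighborhood, so it acquires a common color in both graphs even when the surface entities differ.
\item \emph{Contradictory relational patterns.} An opposite relation type or reversed direction on a shared entity changes that entity's relational multiset. By injectivity it then receives a different color, so the entity that would have matched is moved onto an orthogonal direction $e_{c'}$.
\item \emph{Disconnected or cross-domain $G_k$.} It shares no colors with $G_i$, so $\langle n(G_i), n(G_k)\rangle = 0$ and $S=0$ up to the orthogonality error. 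Meanwhile $S(\Phi(G_i),\Phi(G_j))>0$.
\end{itemize}

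The third step handles the case where $G_k$ shares some entities but contradicts $G_i$. I compare a coherent $G_j$ against a contradicting $G_k$ of matched size. I would argue by a monotonicity lemma: replacing a compatible shared color by a contradictory one lowers the numerator by $n_c(G_i)$ and leaves the denominator essentially unchanged. Iterating this gives the strict inequality.

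The main obstacle is exactly this comparison when $G_j$ and $G_k$ differ in size or in multiplicity profile. Cosine similarity is not monotone in raw overlap once the norms vary, and the statement as written quantifies over \emph{any} $G_k$. I expect the honest route is to make the claim comparative under a normalization or matched-mass condition, e.g.\ $\|n(G_j)\|\approx\|n(G_k)\|$, or to state it in the form ``shared-color overlap of $G_j$ exceeds that of $G_k$.'' I would then note that the degree-normalized implementation only approximates the orthogonality and injectivity, with errors bounded via the condition in Appendix~B.1.
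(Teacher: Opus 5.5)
Your reduction to color histograms with near-orthogonal codes fails on the very case the theorem is about. In that model, a shared entity contributes to $\langle n(G_i), n(G_j)\rangle$ only if its entire typed, directed $L$-hop neighborhood is isomorphic in the two graphs. Neither hypothesis supplies this. First, ``compatible relation types'' is not isomorphism. Second, Lemma~\ref{lemma3} does not give the bridge a common color: it says the bridge is enriched by each passage's \emph{own, independent} local context. Those contexts generally differ, which is what makes the entity a bridge. So by Proposition~\ref{prop} the bridge receives two different colors, and your coding sends them to orthogonal directions. The bridge then contributes nothing to $S(\Phi(G_i),\Phi(G_j))$, nothing separates $G_j$ from a cross-domain $G_k$, and the strict inequality is not established.

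The paper obtains positivity by a soft argument instead. The shared canonical lemma gives near-identical inputs $h_e^{(0)}$ in both graphs (Lemma~\ref{lemma1}). The same weights act in both graphs, and the self-term $\bW_0^{(1)} h_e^{(0)}$ is literally common to both. Hence the two differently enriched embeddings of $e$ stay non-orthogonal and contribute a positive term anchored at $e$ (Lemma~\ref{lemma3}). The missing idea is that cross-graph overlap comes from continuity of the encoder in its inputs. Near-orthogonal color coding throws exactly this away, and it is an extra hypothesis that injectivity does not imply in any case.

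The same rigidity breaks your contradiction step. Injectivity separates every pair of non-isomorphic neighborhoods, and under your coding all distinct colors are equally orthogonal. So ``contradictory'' and ``compatible but different'' are indistinguishable in your model. Your monotonicity lemma (swap a compatible shared color for a contradictory one) presupposes that the compatible version was a shared color to begin with, which by the previous paragraph it generally is not. The paper places the contradiction penalty in the learned relation operators rather than in injectivity: non-commutativity $\bW_{r_1}\bW_{r_2}\neq\bW_{r_2}\bW_{r_1}$ along a path (Lemma~\ref{lemma4}). This is asserted only at convergence of a trained model, and only for changes of relation type or order. Noise is handled separately by Lemma~\ref{lemma2}: isolated nodes keep only their self-loop term and are dominated by backbone nodes in the pooled sum.

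Your closing caveat is nonetheless fair. The paper's sketch combines Lemmas~\ref{lemma1}--\ref{lemma4} qualitatively and never compares $\|\Phi(G_j)\|$ with $\|\Phi(G_k)\|$, so the ``for any $G_k$'' claim is not rigorously settled there either. Some matched-mass condition like yours would be needed for an actual proof, but that is secondary to the gap above.
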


\textbf{Proof Sketch} 
By Proposition \ref{prop}, structurally distinct graphs receive distinct embeddings. We decompose the structural conditions under which similarity increases: shared linguistic features produce non-orthogonal initial projections (Lemma \ref{lemma1}), isolated nodes are attenuated during aggregation (Lemma \ref{lemma2}), shared bridge entities receive independent relational enrichment in each passage's own graph, elevating cross-graph similarity beyond raw lexical overlap (Lemma~\ref{lemma3}), and relational contradictions produce divergent embeddings via non-commutativity of $\bW_r$ matrices (Lemma~\ref{lemma4}). Together, these establish that $S$ separates coherent passage pairs from incoherent ones. While Theorem~\ref{thm1} establishes a general $L$-layer formulation to provide a scale-invariant framework for arbitrary graph diameters, we empirically instantiate CAGE with $L=1$. For our compact passage graphs, bridge entities' relational context is already captured within each passage's own immediate 1-hop neighborhood (Lemma~\ref{lemma3}), so a single layer combined with global sum pooling suffices to elevate cross-graph similarity without requiring deeper propagation or multi-layer over-smoothing. The general $L$-layer theory thus serves as a mathematical blueprint for extending CAGE to long-form contexts or macro-reasoning tasks requiring deeper propagation.

Our proof proceeds by showing that the Relational GCN mapping preserves the structural distinctness of relational multisets. By identifying these distinct neighborhoods as anchors for a coherence manifold, we establish the existence of an injective estimator $\mathcal{S}$. See Appendix \ref{sec:appx_proof} for the full derivation.

By projecting documents into a relational manifold rather than a static semantic space, our method treats retrieval as a validation of structural isomorphism. This framework ensures that retrieved evidence is not merely "topically related" but is topologically and logically integrable for downstream reasoning.

\section{CAGE}

Our pipeline takes the top-$k$ passages from the initial retriever and produces a coherence-aware reranking in four stages: graph construction from each passage, min-out-degree node reweighting, R-GCN encoding, and fused scoring that combines query relevance with inter-chunk coherence. Figure \ref{fig:flow} illustrates the full pipeline.

\begin{figure*}[t!]
    \centering 
    \small
    \includegraphics[width=\textwidth]{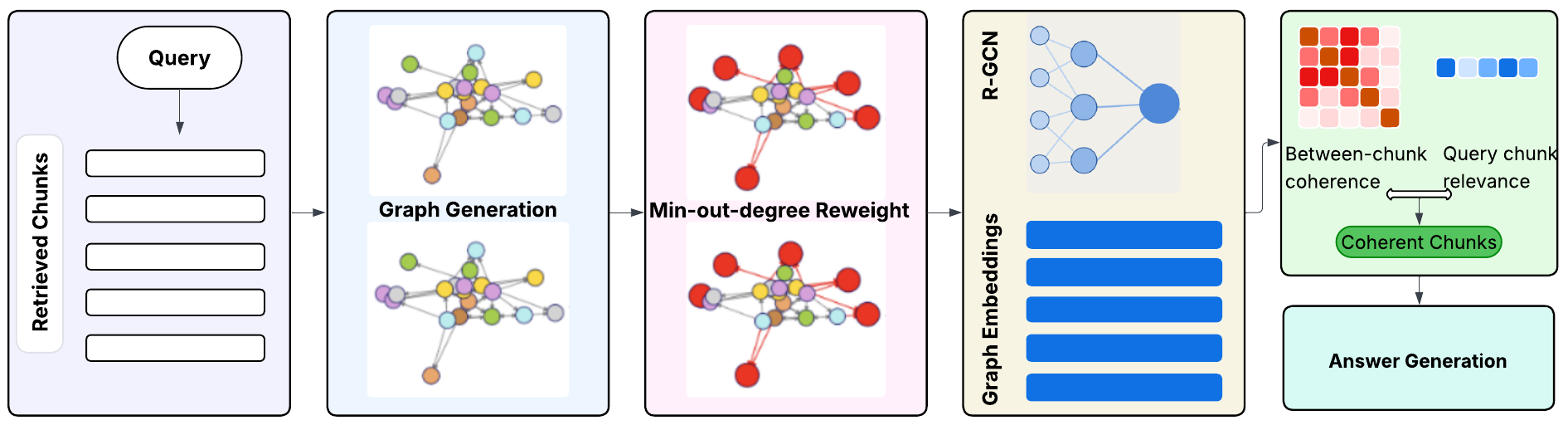}
    \caption{Overview of the CAGE pipeline. Retrieved passages are converted into directed heterogeneous entity graphs, reweighted by min-out-degree to emphasize factual anchors (highlighted nodes), encoded via R-GCN into graph-level embeddings, and scored using a fused objective combining query relevance with pairwise structural coherence.}
    \label{fig:flow}
\end{figure*}


\subsection{Entity Enhanced Graph Construction}

For each document $C_i$, we apply spaCy's dependency parser (en\_core\_web\_lg) with coreference resolution to extract a directed heterogeneous graph $G_i = (V_i, E_i)$. Nodes represent lemmatized content words with pronouns resolved to their canonical referents. Edges encode various relation types derived from linguistic analysis, such as subject–object links labeled by their governing verb lemma (with negation), adjectival/compound modifiers and appositives, numeric and quantity associations, prepositional relations (e.g., spatial, temporal), and named-entity co-occurrence within sentence boundaries. Each node is initialized with a feature vector consisting of its 300-dimensional contextual embedding (L2-normalized), one-hot POS tag, one-hot entity type, and a scalar weight. 

\subsection{Graph Reweighting}
To emphasize coherence-relevant parts of the graph, we apply a reweighting strategy focused on min-out-degree nodes:
$$
V_{i, \text{min}} = \{ v \in V_i \mid \text{deg}^+(v) = \delta^+(G_i) \} 
$$
where $\text{deg}^+(v)$ is the out-degree of node v, and $\delta^+(G_i)$ denotes the minimum out-degree over all nodes in graph $G_i$. These nodes often represent terminal or specific concepts, such as named entities, numeric values, or endpoints of factual statements. We scale their feature vectors and adjacent edge weights by a factor $\beta > 1$. Their low out-degree suggests they are more semantically grounded, playing key roles in anchoring the meaning of the chunk. This structural prior helps the model prioritize information from nodes likely to correspond to specific entities, facts, or low-level concepts, improving downstream graph representation learning.

\subsection{R-GCN Encoding}
\label{sec:4.3}
We encode each reweighted graph using a single R-GCN layer \cite{schlichtkrull2018modeling} with ReLU activation, followed by global sum pooling to obtain a graph-level embedding $Z_i \in \mathbb{R}^{128}$. To prevent representation collapse, the encoder is trained with a diversity objective that minimizes mean pairwise cosine similarity within each batch. 
This objective acts as a regularization mechanism over the embedding space, forcing structurally distinct passage graphs to occupy separated regions so that pairwise distance metrics remain discriminative at test time. 
Consequently, high downstream similarity scores specifically reflect genuine structural overlap, such as shared entities, compatible relations, and connected reasoning paths, rather than uninformative, collapsed representations. We interpret this relational alignment as between-chunk coherence.
\subsection{Coherence Scoring and Reranking} Given graph embeddings ${Z_1, \ldots, Z_k}$, we compute a symmetric coherence matrix $C \in \mathbb{R}^{k \times k}$:
$$C_{ij} = \cos(Z_i, Z_j)$$
The coherence score for passage $i$ aggregates its structural alignment with all other retrieved passages:
$$s_i = \sum_{j=1}^{k} C_{ij} \quad \text{for } j = 1, \dots, k \text{   and  }  i\neq j.$$
Passages with high $s_i$ share entities, compatible relations, and connected paths with the rest of the retrieved set; passages with low $s_i$ are structural outliers likely representing noise or distractors. To balance coherence with query relevance, the final ranking score fuses both signals:
$$f_i = \alpha \cdot \text{sim}(q, C_i) + (1 - \alpha) \cdot \frac{1}{k-1} s_i$$
where $\text{sim}(q, C_i)$ is the query–passage cosine similarity from the initial retriever, and $\alpha \in [0, 1]$ controls the trade-off between relevance and coherence. The normalization by $k-1$ ensures the coherence term is scale-comparable to the query similarity. Passages are reranked by $f_i$ and the top-$k$ are passed to the generator.


\begin{table*}[!ht]
\centering
\small
\begin{tabular}{lccccc}
\toprule
\textbf{Retriever} & \textbf{MuSiQue} & \textbf{2Wiki} & \textbf{HotpotQA} & \textbf{RAMDocs}  \\
\midrule
BM25\cite{robertson2009probabilistic}              & 49.6          & 71.6          & 76.4          &  86.8\\

GTR\cite{ni2022large}               & 49.1     & 67.9   & 73.3     &  \underline{87.4} \\
ColBERTv2\cite{santhanam2022colbertv2}        & 49.2     & 68.2    & 79.3     & \textbf{87.6}  \\
RAPTOR\cite{sarthi2024raptor}        & 45.3          & 53.8          & 71.2     &  87.4  \\
HyperGraphRAG\cite{luo2025hypergraphrag} &     36.8      &      45.9    & 60.5       &  87.2  \\
mxbai-rerank\cite{li2025prorank} & 63.5  & \underline{84.0}&  83.9&  87.1\\
monoT5\cite{pradeep2021expando} & 62.8 & 78.1 & \underline{84.2} & 87.1  \\
Vanilla RAG   &   \underline{63.6}       &      80.4     & 83.8          &  87.2  \\
CAGE              & \textbf{64.0}        & \textbf{85.4} & \textbf{84.2}         & 87.0 \\

\bottomrule
\end{tabular}
\caption{Recall@5 across four multi-hop benchmarks. CAGE achieves the highest Recall@5 on 2WikiMultihopQA and HotpotQA, matching or outperforming the strong baselines, while remaining competitive on MuSiQue and RAMDocs.}
\label{tab:recalls_r5}
\end{table*}


\section{Experimental Setup}

\subsection{Dataset}
We benchmark across four multi-hop datasets: HotpotQA (distractor setting) \citep{yang2018hotpotqa}, 2WikiMultihopQA\citep{ho2020constructing}, MuSiQue \citep{trivedi2022musique}, and RAMDocs \citep{wang2025retrieval}. 
Because isolated per-dimension annotations do not yet exist in standard RAG benchmarks, we therefore leverage multi-hop gold supporting-fact chains as a principled empirical proxy. Gold supporting chains represent the unique evidence subset that must collectively exhibit structural connectivity and logical harmony to form a valid reasoning path, whereas hard distractors act as semantically similar yet topologically disconnected alternatives. 

The four datasets collectively stress-test different coherence dimensions (Table~\ref{tab:datasets_summary}, Appendix \ref{app:data}). HotpotQA (81\% bridge, 19\% comparison) and 2WikiMultihopQA (48\% bridge, 41\% comparison, 11\% inference) emphasize \textit{informational bonding} and \textit{intra-domain relevance} via multi-hop bridge questions that require disambiguating entities from the same domain. MuSiQue introduces 3-4 hop chains with high distractor density, testing \textit{noise resistance} and \textit{intra-domain relevance}, its distractors are intentionally engineered with shared professional vocabulary to be highly adversarial yet structurally disconnected. RAMDocs targets \textit{factual consistency} via adversarial misinformation that modifies only the target assertion while preserving surrounding entity structure. Qualitative case studies mapping each dataset to its primary coherence dimension appear in Appendix \ref{app:example} (Figures \ref{fig:dim1example}–\ref{fig:dim4example}).

\subsection{Setup}  

All methods retrieve from the same candidate pool per dataset: the full set of passages provided by each benchmark (including gold supporting facts and distractors). For each query, we construct entity graphs from the top-$k$ retrieved passages using spaCy (en\_core\_web\_lg) with coreference resolution, apply min-out-degree reweighting ($\beta = 2$), and train a single-layer R-GCN independently on that query's $k$ passage graphs. The R-GCN takes node features (300-dim contextual embedding, one-hot POS, one-hot entity type, weight scalar), edge indices, and edge types as input, producing 128-dimensional graph-level embeddings via global sum pooling. Training uses a diversity objective that minimizes mean pairwise cosine similarity across the $k$ graph embeddings, encouraging structurally distinct passages to occupy distinct regions of the embedding space. Since each training instance consists of only $k$ small graphs (typically 10–50 nodes each), training converges within 30–60 epochs in under a second per query using Adam (lr = 1e-3). No offline pre-training or corpus-level computation is required. The fusion weight is fixed at $\alpha =0.7$, which is selected via grid search over $\alpha \in ({0.1, 0.2, \ldots, 1.0})$ and fixed across all datasets as Figure \ref{fig:alpha} shows in Appendix \ref{app:figs}.


\section{Results}
\subsection{Retrieval Performance}
\label{subsec:retrievalperformance}

We evaluate whether our fused scoring successfully promotes gold supporting chunks into the top-5 positions. Baselines span retrieval paradigms: BM25 \cite{robertson2009probabilistic} for sparse retrieval and GTR \cite{ni2022large} for dense bi-encoder retrieval, ColBERTv2 \cite{santhanam2022colbertv2} for late interaction, RAPTOR \cite{sarthi2024raptor} for recursive abstractive retrieval, and HyperGraphRAG \cite{luo2025hypergraphrag} for hypergraph-structured retrieval, monoT5 \cite{pradeep2021expando} and  mxbai-rerank \cite{li2025prorank} for cross-encoder reranking. Vanilla RAG (Section 2.2) serves as our ablation baseline; it shares the same first-stage retriever as CAGE but applies no graph reranking.

We report Recall@5: the fraction of gold supporting chunks that appear in the top-5 after ranking. This metric is particularly revealing for coherence-aware reranking because bridge passages, structurally necessary for the reasoning chain but semantically distant from the query, often fail to surface under relevance-only ranking.

Figure \ref{fig:heatmapexample1} visualizes this effect on a single example. The left heatmaps show pairwise semantic similarity and query relevance: the second gold passage scores only 0.29 against the query and exhibits weak pairwise similarity to the other two gold passages. Under relevance-only ranking, this passage would be displaced by higher-scoring distractors. The right heatmap shows our structural coherence scores: all three gold passages converge into a distinct high-similarity cluster, cleanly separated from distractors. The R-GCN recovers the latent connectivity, shared entities and compatible relations, that surface similarity misses, effectively rescuing a semantically distant but structurally essential bridge passage.

Table \ref{tab:recalls_r5} presents Recall@5 across the four benchmarks. CAGE achieves the strongest results on 2WikiMultihopQA (85.4), outperforming Vanilla RAG by 5 points and ColBERTv2 by 17 points, and on HotpotQA (84.2), matching monoT5 and outperforming all other baselines; 
and MuSiQue (64.0), edging out Vanilla RAG (63.6) and monoT5 (62.8). 
The strongest gains appear on bridge-dominated datasets (2WikiMultihopQA: 48\% bridge; HotpotQA: 81\% bridge), precisely the setting where structural coherence identifies connecting passages that relevance-only scoring overlooks. On MuSiQue, where 3-4 hop chains increase distractor density, the margin is narrower but CAGE still achieves the top score, suggesting that coherence signals provide consistent benefit even when structural overlap between gold passages and distractors is high. On RAMDocs, all methods cluster within 1 point (86.8--87.6), indicating that adversarial misinformation, where contradictory passages share entity structure with gold ones, cannot be resolved through structural connectivity alone.

\begin{figure}[h!]
    \centering 
    \includegraphics[width=0.50\textwidth]{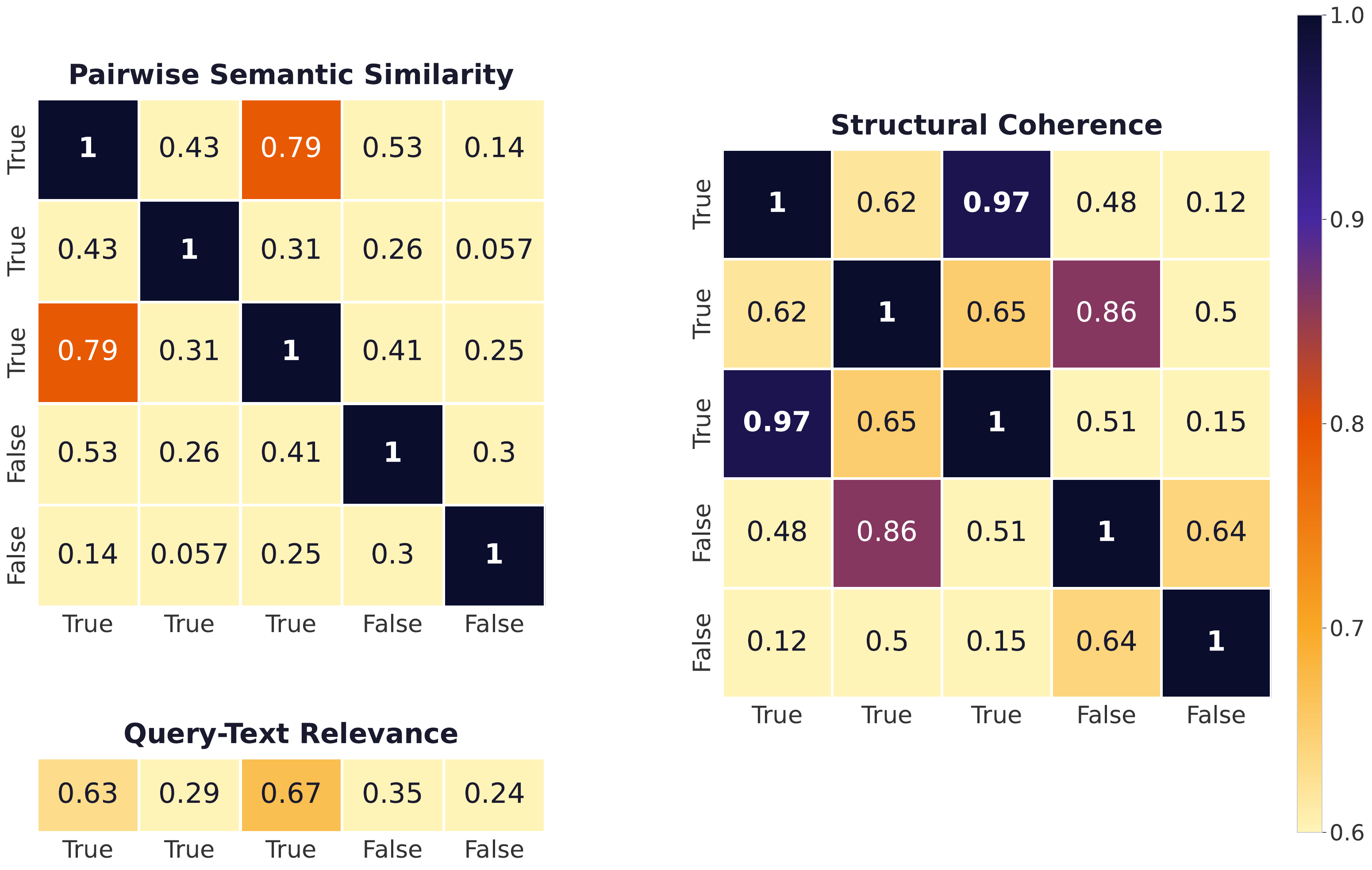}
    \caption{Semantic similarity (left) vs. structural coherence (right) for a retrieved set of five passages (three gold, two distractors). Semantic similarity and query relevance fail to unify the gold set, the second gold passage scores 0.29 against the query. Structural coherence recovers the latent connectivity among all three gold passages, forming a separable cluster that distinguishes supporting evidence from distractors.}
    \label{fig:heatmapexample1}
\end{figure}





\begin{table*}[h]
\centering
\small 
\setlength{\tabcolsep}{3pt} 
\begin{tabular}{lcccccccccccccccc}
\toprule
\multirow{2}{*}{\textbf{Model}} & \multicolumn{8}{c}{\textbf{Gemma 4 26B A4B}} & \multicolumn{8}{c}{\textbf{Llama 3.3 70B}} \\

  & \multicolumn{2}{c}{\textbf{MuSiQue}} & \multicolumn{2}{c}{\textbf{2Wiki}} & \multicolumn{2}{c}{\textbf{HotpotQA}} & \multicolumn{2}{c}{\textbf{RAMDocs}} &
 \multicolumn{2}{c}{\textbf{MuSiQue}} & \multicolumn{2}{c}{\textbf{2Wiki}} & \multicolumn{2}{c}{\textbf{HotpotQA}} & \multicolumn{2}{c}{\textbf{RAMDocs}}
 \\ 

 \cmidrule(lr){2-3} \cmidrule(lr){4-5} \cmidrule(lr){6-7} \cmidrule(lr){8-9} \cmidrule(lr){10-11} \cmidrule(lr){12-13} \cmidrule(lr){14-15} \cmidrule(lr){16-17}
 \textbf{Retriever} & EM & F1 & EM & F1 & EM & F1 & EM & F1 & EM & F1 & EM & F1 & EM & F1 & EM & F1 \\ \midrule
monoT5 & 15.8 & 17.0& 34.2 & 34.4 & 50.3 & \underline{51.3} & 42.7 & \textbf{59.8} & 24.6 & 28.3 & 47.6 & 49.1 &\underline{54.2} &\textbf{59.5} & 43.3& \textbf{67.6}\\
Vanilla RAG &\underline{16.6} & \underline{17.6} & \underline{38.1} & \underline{38.8}& \textbf{51.7}& \textbf{51.9} & \textbf{45.3} & \underline{59.7}
& \underline{29.0} & \underline{32.2}&\underline{52.6}  & \underline{56.0} &50.6 & \underline{58.7} & \textbf{45.7} & \underline{66.1}
\\
CAGE & \textbf{17.1} & \textbf{18.0} &  \textbf{40.1}&  \textbf{40.2} & \underline{50.7} & 50.8 & \underline{44.7} & 59.3 & \textbf{30.4} & \textbf{33.1} & \textbf{54.3} & \textbf{56.9}  & \textbf{55.0} & 58.4 &  \underline{44.7}&65.8 \\
\bottomrule
\end{tabular}
\caption{Generation Performance Comparison on MuSiQue, 2Wiki, HotpotQA and RAMDocs Datasets}
\label{tab:generation_1}
\end{table*}

\subsection{Answer Generation}
Beyond retrieval quality, we evaluate the downstream impact of our reranking strategies on answer generation. We provide the top-5 retrieved chunks as context to two popular language models: \textit{Gemma 4 26B A4B} \cite{google2026gemma4} and \textit{Llama 3.3 70B} \cite{meta2024llama33} with temperature 0. We report Exact Match (EM) and F1, comparing CAGE against monoT5 and Vanilla RAG which are the two strong baselines on Recall@5 (Table \ref{tab:recalls_r5}).

Table \ref{tab:generation_1} presents end-to-end results. CAGE achieves the highest EM on 5 of 8 dataset–model combinations, while F1 improvements are more modest. This performance asymmetry highlights a core benefit of coherence-aware reranking. EM is a strict, fine-grained metric requiring the complete reasoning chain to be present in context, a single missing bridge passage drops the score to zero. Conversely, F1 rewards partial token overlap and remains tolerant of incomplete or noisy context. Because CAGE specifically rescues structurally necessary bridge passages that relevance-only scoring demotes, the resulting EM gains scale directly with dataset bridge density, yielding the most pronounced improvements on bridge-dominated benchmarks. This EM advantage demonstrates that promoting set-level structural coherence supplies the generator with the complete evidence chain necessary for exact multi-hop answer extraction, rather than merely superficial text overlap.


\begin{figure}[h!]
    \centering 
    \includegraphics[width=0.48\textwidth]{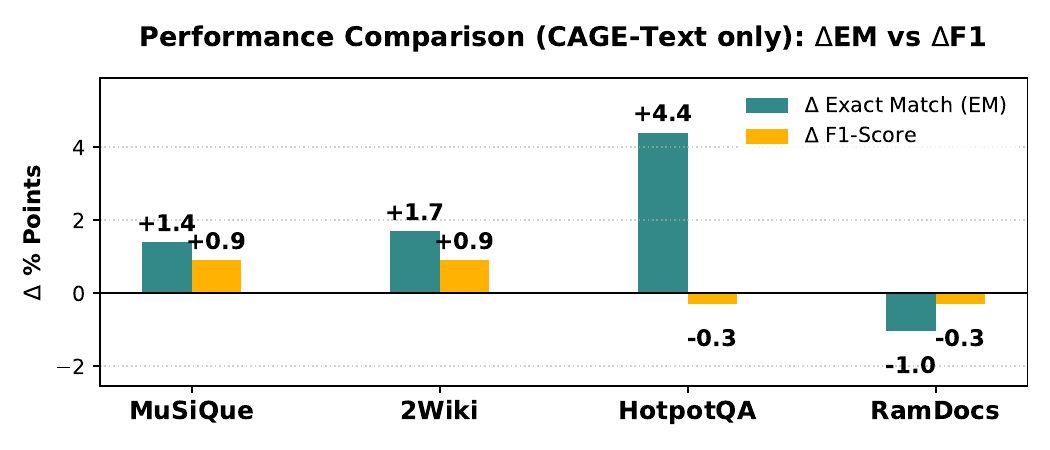}
    \caption{Ablation of the coherence signal. Bars show the difference in EM and F1 (percentage points) between CAGE's fused score and text-only query relevance (Vanilla RAG) based on Llama model, isolating the contribution of the graph-based coherence term $s_i$.}
    \label{fig:emf1}
\end{figure}


\subsection{Ablation: Coherence Signal Contribution} 
To isolate the architectural contribution of CAGE, Table \ref{tab:ablation1} presents a component-wise ablation of Recall@5 across all four benchmarks. Replacing the standard GCN encoder with an R-GCN encoder(CAGE No-reweight) yields the single largest retrieval improvement, most notably on MuSiQue, where Recall@5 jumps by +10.5 points. This confirms that relation-type-specific transformation matrices are essential for capturing heterogeneous multi-hop connectivity. Incorporating min-out-degree node reweighting (CAGE Full) provides an additional performance boost across all datasets, achieving peak recall on 2Wiki (+3.9 points) and HotpotQA (+1.8 points). The corresponding EM and F1 results are listed in Table~\ref{tab:ablation2} in Appendix~\ref{appd:table}. This validates our structural prior that scaling low out-degree nodes effectively amplifies grounded factual anchors during representation learning.

Beyond architectural components, we isolate the effect of graph-based coherence, Figure~\ref{fig:emf1} compares the full fused score $f_i = \alpha \cdot \text{sim}(q, C_i) + (1-\alpha)\frac{1}{k-1} s_i$ against text-only ranking using $\text{sim}(q, C_i)$ alone (in Vanilla RAG). The coherence term $s_i$ drives consistent EM gains on MuSiQue (+1.4), 2Wiki (+1.7), and HotpotQA (+4.4), while F1 improvements remain modest (+0.9, +0.9, -0.3). 
This asymmetry confirms that coherence reranking primarily enhances extraction precision: EM requires the context to contain the complete, unfragmented reasoning chain, whereas F1 tolerates partial token overlap from noisy evidence. The largest gain (+4.4 EM on HotpotQA) aligns with its high proportion of bridge questions (81\%), where the coherence score successfully rescues structurally necessary passages. 

The slight decline on RAMDocs (-1.0 EM) reflects a limitation: adversarial misinformation passages share surrounding entity structure with gold facts, rendering structural connectivity alone insufficient to resolve logical conflicts. These differences align with dataset characteristics (Table \ref{tab:datasets_summary}): HotpotQA and 2Wiki reward structural connectivity between passages, where the coherence term excels. MuSiQue's 3–4 hop chains with high distractor density mean the coherence gain is real but smaller, as distractors share entity structure with gold passages. RAMDocs is adversarial by design, contradictory passages are topologically similar to gold ones, limiting structural signals.

To understand this failure mode, we examine RAMDocs cases where CAGE demotes gold passages. Adversarial passages are constructed by modifying only the target fact while preserving all surrounding entity structure, producing near-isomorphic graphs that share the same nodes and relation types but differ in a single edge target (see Figure \ref{fig:dim4example}). Because the R-GCN assigns high structural coherence scores to both variants, this confirms that topology-based encoding cannot distinguish assertion-level factual shifts, indicating that complementary claim-level NLI verification is required for adversarial settings.

\begin{table}[!ht]
\centering
\small
\resizebox{\columnwidth}{!}{%
\begin{tabular}{lccccc}
\toprule
\textbf{Retriever} & \textbf{MuSiQue} & \textbf{2Wiki} & \textbf{HotpotQA} & \textbf{RAMDocs}  \\
\midrule

GCN encoder & 52.1 & 79.7& 80.3 & 86.3\\
CAGE (No-reweight)  & 62.6       & 81.5 & 82.4        & 86.7 \\
CAGE (full)              & 64.0        & 85.4 & 84.2         & 87.0 \\

\bottomrule
\end{tabular}%
}
\caption{Ablation study evaluating Recall@5 across all benchmarks to isolate the impact of the R-GCN architecture (vs. a vanilla GCN) and the min-out-degree node reweighting mechanism.}
\label{tab:ablation1}
\end{table}


\section{Conclusion}

We introduced CAGE, a framework that reranks retrieved passages by structural coherence rather than individual query relevance alone. Our central contribution is a four-dimensional decomposition of between-chunk coherence: Intra-domain relevance, noise resistance, informational bonding, and factual consistency, operationalized through directed heterogeneous entity graphs encoded via R-GCN. Empirically, CAGE achieves the highest Recall@5 on bridge-dominated benchmarks (2WikiMultihopQA, HotpotQA) and consistently improves Exact Match in downstream generation, confirming that structurally coherent context translates to more precise answers. Notably, even when retrieval recall decreases (MuSiQue), generation quality improves, demonstrating that between-chunk coherence is a complementary signal to per-chunk relevance, not a substitute for it.

\section*{Limitations}
The current framework has three primary constraints. First, graph construction relies on rule-based dependency parsing (spaCy), which limits relation extraction quality to the parser's accuracy; errors in entity recognition or dependency resolution propagate into the graph topology. Second, no existing RAG benchmark provides fine-grained between-chunk coherence annotations along our four dimensions; our evaluation relies on multi-hop supporting-fact annotations as an empirical proxy for between-chunk coherence rather than direct, per-dimension labels. Constructing a dedicated benchmark with explicit per-dimension coherence labels, a direction made tractable by the theoretical formalization we introduce in this work remains an important objective for future research.
Third, while the tunable fusion weight $\alpha$ balances coherence and relevance signals, it is currently set via grid search; an adaptive mechanism that adjusts $\alpha$ per query based on retrieval confidence or graph density could improve robustness on adversarial benchmarks like RAMDocs.

Our evaluation relies on multi-hop supporting-fact annotations as a proxy for the four coherence dimensions rather
  than direct, per-dimension coherence labels, since no existing RAG benchmark provides such fine-grained
  annotations. Constructing a benchmark with explicit per-dimension coherence labels, made tractable by the
  formalization we introduce in this work, is an important direction for future work.

\section*{Ethical considerations}                                                                    
This work uses publicly available multi-hop question answering benchmarks (HotpotQA, 2WikiMultihopQA, MuSiQue) derived from Wikipedia, which do not contain personally identifying information or offensive content. No new data involving human subjects was collected, and no crowd-sourcing or annotation was performed.                                                

Our answer generation relies on open-weight models (Gemma and LLaMA), which are run locally without transmitting data to external APIs. The CAGE reranking pipeline itself is lightweight (approximately 210K parameters) and requires minimal computational resources, making it accessible to researchers with limited hardware.

We do not foresee direct negative societal impacts from this work. However, as with all retrieval-augmented generation systems, improved retrieval coherence does not guarantee factual correctness of generated answers. Users of RAG systems should exercise caution and verify outputs in high-stakes domains such as healthcare or legal applications.



\newpage

\newpage

\bibliography{custom}

\clearpage 
\onecolumn 

\appendix

\section{Dataset summary}
\label{app:data}
The dataset information is listed in Table \ref{tab:datasets_summary}
\begin{table*}[h]
\centering
\small
\begin{tabular}{@{}c l c l c @{}}
\toprule
\textbf{Dataset} & \textbf{Samples} & \textbf{ \# candidates} & \textbf{Type / Characteristics} \\ 
\midrule
\textbf{MuSiQue} & 1,000 & 20 & Logical Chains, High Distractor Density \\
\textbf{HotpotQA} & 1,000 & 10 & Bridge (81\%), Comparison (19\%) \\
\textbf{2Wiki} & 1,000 & 10 & Comp. (41\%), Bridge (48\%), Inf. (11\%) \\
\textbf{RAMDocs} & 500 & [3,12]&  Adversarial Misinformation  \\
\bottomrule
\addlinespace
\end{tabular}
\caption{Summary of evaluation datasets, including sample sizes and linguistic characteristics across four dimensions.}
\label{tab:datasets_summary}
\end{table*}

\section{Proof of Relational Coherence Theorem }
\label{sec:appx_proof}
\textbf{Notations:} Our framework utilizes a structured notation to formalize the relational graph operations. We define a relational graph as $G = (V, E, \mathcal{R})$, where $V$ represents the set of nodes, $E$ the edges, and $\mathcal{R}$ the set of unique relations. For each node $i$ at a specific layer $l$, the hidden state is denoted by $h_i^{(l)}$. The transformation of these states is governed by $\bW_r^{(l)}$, which represents the relation-specific weight matrix for a given relation $r$ at layer $l$. The high-level representation of the graph is produced by the global graph encoding function $\Phi : \mathcal{G} \to \mathcal{E}$, which maps the graph structure into a latent embedding space $\mathcal{E}$. Finally, we evaluate the quality of these embeddings through the unified coherence manifold, which is characterized by its sub-dimensions $\mathcal{C}_{rel, noise, bond, cons}$ where $\mathcal{C}$ represents the Coherence Manifold which is a theoretical scalar representing the structural integrity of a document relative to a query.

\subsection{Proof of Proposition \ref{thm1}}
To prove that an R-GCN is as discriminative as the Relational Weisfeiler-Lehman (R-WL) algorithm, we must demonstrate that its aggregation function is injective over relational multisets.

\paragraph{Proof:}
Let the relational multiset of node $i$ at layer $l$ be defined as $\mathcal{M}_i = \{\!\{(r, h_j^{(l)}) : (j,r,i) \in E\}\!\}$. Consider the unnormalized update
  \[
  h_i^{(l+1)} = \sigma \left( \sum_{r \in \mathcal{R}} \bW_r^{(l)} \sum_{j \in \mathcal{N}_i^r} h_j^{(l)} + \bW_0^{(l)} h_i^{(l)} \right).
  \]
As established by \citet{xu2018powerful}, a GNN aggregator is as powerful as the WL test if and only if its neighbor-aggregation function is injective over multisets; sum aggregation satisfies this, whereas mean or max aggregation does not, since both can collapse multisets that share the same support but differ in multiplicity. By assigning a unique weight matrix $\bW_r$ to each $r \in \mathcal{R}$, the model partitions the feature space into $|\mathcal{R}|$ independent linear channels; the sum of injective functions over disjoint partitions remains injective, so the R-GCN can distinguish any two relational multisets $\mathcal{M}_1 \neq \mathcal{M}_2$, provided the matrices $\bW_r$ have sufficient rank. Thus $h_u^{(L)} = h_v^{(L)}$ if and only if the $L$-hop relational neighborhoods of $u$ and $v$ are isomorphic.

Our implementation instead uses the degree-normalized update with $\frac{1}{c_{i,r}}$ weighting -- a mean-style aggregator that is therefore not injective in general, by the same result of \citet{xu2018powerful}: two relational multisets whose per-relation neighbor sums differ only by the ratio of their degrees $c_{i,r}$ produce identical normalized aggregates. Injectivity is recovered when $c_{i,r}$ is constant across the compared neighborhoods, in which case normalization is a uniform rescaling of the sum case above. We adopt the normalized form for gradient stability during training and rely on the diversity training objective Section \ref{sec:4.3} to discourage the embedding collapse that the unnormalized case rules out analytically.

\subsection{Proof of Lemma 1}
\begin{lemma}{}
\label{lemma1}
Embedding similarity between two passage graphs is bounded by their node-level linguistic overlap.
\end{lemma}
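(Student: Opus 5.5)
We read Lemma~\ref{lemma1} for the single-layer encoder actually used ($L=1$, ReLU, global sum pooling). The phrase "bounded by" is taken to mean that the cross-graph inner product $\langle \Phi(G_i), \Phi(G_j)\rangle$, and hence the cosine $S$, is controlled by the pairwise inner products of the initial node features $x_u$, $u \in V_i$, and $x_v$, $v \in V_j$. Recall that each $x_u$ stacks the L2-normalized 300-dimensional contextual embedding, the one-hot POS tag, the one-hot entity type and the reweighting scalar.

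\textbf{Step 1: write the pooled embedding explicitly.} We have $\Phi(G) = \sum_{u\in V}\sigma(a_u)$, where the pre-activation is
\[
a_u = \sum_{r}\bW_r \sum_{w\in\mathcal N_u^r} x_w + \bW_0 x_u .
\]
Since ReLU is 1-Lipschitz and satisfies $\sigma(0)=0$, we get $\|\sigma(a_u)-\sigma(a_v)\|\le \|a_u-a_v\|$ and $\|\sigma(a_u)\|\le\|a_u\|$. By linearity, $a_u$ is the image of the local feature multiset under the fixed linear map $\mathcal W=[\bW_0,\bW_{r_1},\dots]$. So $\Phi(G)=\sum_u \sigma(\mathcal W\,\xi_u)$, where $\xi_u$ is the concatenated per-relation neighbour-sum vector of node $u$.

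\textbf{Step 2: the bound.} Expand
\[
\langle\Phi(G_i),\Phi(G_j)\rangle=\sum_{u\in V_i}\sum_{v\in V_j}\langle\sigma(\mathcal W\xi_u),\sigma(\mathcal W\xi_v)\rangle .
\]
Use $\langle\sigma(a),\sigma(b)\rangle\le \tfrac12\big(\|a\|^2+\|b\|^2-\|a-b\|^2\big)$ together with Lipschitzness to bound each term. Each term is at most $\|\mathcal W\|^2$ times a quantity that grows with $\langle\xi_u,\xi_v\rangle$. That quantity in turn grows with the feature overlap $\langle x_{u'},x_{v'}\rangle$ over matched relation channels. Since the one-hot blocks contribute $1$ exactly on equal POS or entity type and $0$ otherwise, the right-hand side becomes a weighted count of shared lemma, type and POS pairs, plus the cosines of their contextual embeddings. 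This defines a "linguistic overlap" $\Omega(G_i,G_j)$ with
\[
\langle\Phi(G_i),\Phi(G_j)\rangle\le \|\mathcal W\|^2\,\Omega(G_i,G_j).
\]
The matching lower direction, non-orthogonality, follows because ReLU outputs are nonnegative. Whenever some shared feature produces coordinates positive in both graphs, the inner product is strictly positive, whereas fully disjoint features with orthogonal $\mathcal W$-images give $0$.

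\textbf{Step 3: pass to the cosine.} Divide by $\|\Phi(G_i)\|\|\Phi(G_j)\|$, bounding the norms from below by the diversity-regularized scale (or by assumption). The reweighting factor $\beta$ enters only as a multiplicative weight on the min-out-degree nodes within $\Omega$.

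\textbf{Main obstacle.} The hard part is Step 2. ReLU is not linear, so an inner product of rectified vectors is not bounded by the inner product of the inputs in any clean two-sided way: negative overlap gets clipped, and the nonnegativity of ReLU outputs can create positive similarity even without overlap. I would first try to state the bound in the weaker, monotone form "similarity is at most $\|\mathcal W\|^2\Omega$ plus a norm term that vanishes when features are orthogonal after projection." If that fails, I would fall back to an explicit assumption that the $\mathcal W$-images of disjoint feature blocks have disjoint positive supports, under which the expansion becomes exact.
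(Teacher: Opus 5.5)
There is a genuine gap, and it sits exactly at the obstacle you flag. In Step~2, $\tfrac12(\|a\|^2+\|b\|^2-\|a-b\|^2)$ is just $\langle a,b\rangle$. So you are invoking $\langle\sigma(a),\sigma(b)\rangle\le\langle a,b\rangle$, which is false for ReLU: $a=(1,-1)^\top$, $b=(1,1)^\top$ give $\langle\sigma(a),\sigma(b)\rangle=1>0=\langle a,b\rangle$. Polarization plus 1-Lipschitzness cannot produce this inequality. Lipschitzness bounds the \emph{subtracted} term $\|\sigma(a)-\sigma(b)\|^2$ from above, which is the wrong side for an upper bound. Since $a$ and $b$ are orthogonal, the same pair also refutes your claim that orthogonal $\mathcal W$-images contribute $0$.

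The linear part fails too. Even without $\sigma$, $\langle\mathcal W\xi_u,\mathcal W\xi_v\rangle=\xi_u^\top\mathcal W^\top\mathcal W\xi_v$ tracks $\langle\xi_u,\xi_v\rangle$ only up to an error $\|\mathcal W^\top\mathcal W-cI\|\,\|\xi_u\|\|\xi_v\|$. A full-rank but ill-conditioned $\mathcal W$ maps orthogonal inputs to nearly parallel vectors, so neither $\|\mathcal W\|$ nor ``sufficient rank'' controls the cross terms. In Step~3, the diversity objective acts on cosines, which are scale-invariant, so it supplies no lower bound on $\|\Phi(G_i)\|$.

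More fundamentally, any bound $\langle\Phi(G_i),\Phi(G_j)\rangle\le\|\mathcal W\|^2\,\Omega$ with $\Omega=0$ under zero overlap is false, and so is your weaker monotone variant. Take two one-node graphs with orthonormal features $e_1,e_2$ and the scaled rotation $\bW_0$ with $\bW_0e_1=(1,-1)^\top$, $\bW_0e_2=(1,1)^\top$. The projections are exactly orthogonal and $\bW_0$ is perfectly conditioned, yet the pooled embeddings $(1,0)^\top$ and $(1,1)^\top$ have cosine $1/\sqrt2$. Sum pooling makes this worse, since every cross term is an inner product of nonnegative vectors. With i.i.d.\ Gaussian weights and $2n$ orthonormal inputs $\xi_u$ split between two $n$-node graphs, the pooled cosine concentrates at $n/(n+\pi-1)$ as the width grows (about $0.82$ for $n=10$).

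A statement of this kind therefore needs an explicit structural hypothesis. One option is your fallback (disjoint positive supports after $\mathcal W$ for non-overlapping nodes), but that is essentially the lemma restated node by node. Another is a linear readout with $\mathcal W^\top\mathcal W\approx cI$. Neither is provided by the architecture or the diversity objective.

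The paper's proof follows the same contrapositive route as yours, only qualitatively. It argues that zero overlap gives near-orthogonal initial projections, and asserts that an update which is ``a linear transformation followed by non-linearity'' cannot ``generate significant alignment from orthogonal initial states.'' That assertion is precisely the step refuted above. So the paper does not supply an ingredient you are missing; a rigorous version of either argument must state the extra hypothesis explicitly.
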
 

\paragraph{Proof:}
Let $x_i, x_j$ be input feature vectors for nodes in graphs $G_i$ and $G_j$ respectively, consisting of contextual embeddings, one-hot POS, and entity type indicators. At initialization, $h^{(0)} = \bW_{\text{proj}} x$. The graph-level similarity after sum pooling is:
  \[
  S(\Phi(G_i), \Phi(G_j)) = \cos\left(\sum_{u \in V_i} h_u^{(L)},\; \sum_{v \in V_j} h_v^{(L)}\right)
  \]
If $V_i$ and $V_j$ share no linguistic overlap (no common entities, POS patterns, or GloVe-similar terms), their initial projections lie in near-orthogonal subspaces. Since the R-GCN update is a linear transformation followed by non-linearity, it cannot generate significant alignment from orthogonal initial states. Therefore, linguistic overlap is a necessary condition for high embedding similarity, establishing that $S$ naturally gates out passage pairs with no shared referential content (Intra-Domain Relevance).

Mathematically, orthogonal feature spaces prevent similarity aggregation. In natural language terms, two passages cannot exhibit intra-domain relevance if they share no referential anchors or domain entities, naturally filtering out unrelated topics prior to deep graph encoding.

\subsection{Proof of Lemma 2}

\begin{lemma}
\label{lemma2}
    The architecture attenuates signal from isolated informational ``islands.''
\end{lemma}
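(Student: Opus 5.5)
The plan is to make the informal statement precise and then argue from the aggregation rule of Proposition~\ref{prop} combined with global sum pooling. We call a node $u \in V_i$ an \emph{island} if it has no incident edges in $G_i$, so $\mathcal{N}_u^r = \emptyset$ for all $r$. More generally, a small connected component disjoint from the component containing the query-bearing or bridge entities also counts. ``Attenuation'' will mean that such nodes contribute a bounded, non-amplified term to $\Phi(G_i)$, while nodes in the connected backbone contribute terms that grow with their relational multisets. The separation therefore widens as $L$ increases or as backbone degree grows.

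The first step is the update for an island. Here the neighbor sum vanishes and the rule collapses to
\[
h_u^{(l+1)} = \sigma\bigl(\bW_0^{(l)} h_u^{(l)}\bigr).
\]
Hence $\|h_u^{(L)}\| \le \prod_{l} \|\bW_0^{(l)}\|\,\|h_u^{(0)}\|$, using that ReLU is $1$-Lipschitz with $\sigma(0)=0$. The same holds for a component of bounded size $m$: each node aggregates at most $m-1$ neighbors, so its norm is bounded by a constant depending on $m$ and the weights only.

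The second step treats backbone nodes. For a node $v$ with $d_v$ neighbors, the term $\sum_r \bW_r^{(l)} \sum_{j \in \mathcal{N}_v^r} h_j^{(l)}$ accumulates $d_v$ contributions under unnormalized sum aggregation. Assuming non-negative, post-ReLU features that are positively aligned after the first layer, $\|h_v^{(l)}\|$ grows roughly like the number of $l$-hop relational walks into $v$. Under sum pooling, $\Phi(G_i)$ then decomposes into a backbone part $B_i$ and an island part $I_i$ with $\|I_i\| / \|B_i\| \to$ small. The min-out-degree reweighting with $\beta>1$ acts on endpoints of factual edges, which are backbone nodes, and can only enlarge $B_i$. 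Isolated zero-out-degree nodes also have minimum out-degree, so this point needs care: I would note that in-degree-zero, out-degree-zero islands still receive no neighbor messages, so $\beta$ rescales them linearly but never compounds.

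The third step transfers this to cosine similarity. Write
\[
\cos(\Phi(G_i),\Phi(G_j)) = \cos(B_i + I_i,\; B_j + I_j).
\]
A perturbation bound of the form $|\cos(a+e,b+f)-\cos(a,b)| \le 2(\|e\|/\|a\| + \|f\|/\|b\|)$ then shows that the similarity is governed by the backbones, up to an error controlled by the island-to-backbone ratio. Distractor passages whose shared query keyword sits only on an island node therefore cannot raise $S$ by much, which is the Noise Resistance claim.

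The main obstacle is the second step. Growth of backbone norms is not guaranteed for arbitrary weights, since ReLU may zero channels or the $\bW_r$ may contract. Moreover, the implementation is degree-normalized, which removes the degree growth outright. I would therefore state the lemma under explicit assumptions: unnormalized sum aggregation, $\|\bW_r\|$ bounded below on the positive cone, and the diversity objective preventing collapse. For the normalized case I would fall back on a weaker claim. Islands receive only the self-loop term, while backbone nodes mix in relational messages. Attenuation then comes from the $\beta$ weighting concentrated on connected anchors, rather than from norm growth.
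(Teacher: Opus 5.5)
Your core argument is the paper's: an island receives no relational messages, so its update collapses to the self-loop $\sigma(\bW_0 h)$, while a backbone node sums degree-many messages and therefore dominates the global sum pool under unnormalized aggregation. Where you deviate, you are more careful than the paper.

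The paper's closing claim, that $d\cdot\min_r\sigma_{\min}(\bW_r)>1$ is ``guaranteed by the sufficient-rank condition,'' does not follow. Full rank only gives $\sigma_{\min}>0$. The singular-value bound also ignores ReLU truncation and cancellation across relation channels, and the island side should be measured against $\|\bW_0\|$, not $1$. Making the growth an explicit assumption is therefore the right move. One caveat: at the implemented depth $L=1$, your positivity assumption ``after the first layer'' has no force. The only aggregation there is over signed input features, so growth must be assumed directly for the first-layer pre-activation.

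Your cosine step is a genuine addition. The bound holds via $\bigl\|x/\|x\|-y/\|y\|\bigr\|\le 2\|x-y\|/\|x\|$, and it turns the paper's node-wise dominance into a statement about $S$. This requires that you also control the number of island nodes, since $\|I_i\|$ grows with their count. Non-negativity after ReLU helps here, because $\|B_i\|\ge\max_v\|h_v^{(L)}\|$.

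The step that fails is your treatment of the reweighting, and with it your fallback for the degree-normalized implementation. As you noticed, a degree-zero island has out-degree $0=\delta^+(G_i)$, so it always lies in $V_{i,\min}$. With no bias and ReLU positively homogeneous, its contribution to $\Phi(G_i)$ is multiplied by exactly $\beta$.

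On the backbone side, in the unnormalized update only the min-out-degree sinks are rescaled, each by $\beta$, and non-sink nodes are untouched. By non-negativity, $\|B_i\|$ therefore grows by a factor in $[1,\beta]$. Sinks send no messages, so the anchors' $\beta$ does not compound either, and at $L=1$ nothing compounds at all. Reweighting thus never improves $\|I_i\|/\|B_i\|$ for such islands and can worsen it by up to $\beta$.

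Your remark that reweighting ``can only enlarge $B_i$'' is true but beside the point. Your claim that in the normalized case attenuation ``comes from the $\beta$ weighting concentrated on connected anchors'' is false, because the weighting is not concentrated on connected nodes. Without degree growth and without help from $\beta$, the normalized case supports only the qualitative statement that islands carry no relational information, not that their signal is attenuated in the pool.

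The paper does not handle the normalized case either; its bound uses unnormalized sums without saying so. The defensible version of the lemma is your unnormalized statement, with $\beta$ treated as a constant-factor cost absorbed into the asymptotics.
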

 
\paragraph{Proof:}
Consider a noise node $v_n$ that is semantically similar to the query but topologically isolated (degree 0 or connected only within a small disconnected component). In the R-GCN update, $v_n$ receives no neighbor messages (or messages only from its small island). Its updated embedding $h_{v_n}^{(1)}$ therefore reflects only its self-loop:
$\|h_{v_n}^{(L)}\|_2 =\| \sigma(\bW_0^{(L-1)} h_{v_n}^{(L-1)})\|_2$, lacking the relational enrichment that connected nodes receive. 
By contrast, consider a connected backbone node $v$ with out-degree $d = \sum_{r \in \mathcal{R}} |\mathcal{N}_v^r|$, unrolling the $L-$layer R-GCN recursion over its relational neighborhood yields:
\[
\|h_v^{(L)}\|_2 \geq d \cdot \min_r \sigma_{\min}(\bW_r)^{L-1} \cdot \|h_{avg}\|_2
\]
where $\sigma{\min}(\bW_r)$ denotes the minimum singular value of the relation matrix $\bW_r$ and $h_{avg}$ is the average neighbor feature vector. Backbone node representations strictly dominate isolated node representations in the global sum pool $\Phi(G)$ when
\[
d \cdot \min_r \sigma_{\min}(\bW_r) > 1
\]
which is guaranteed by the sufficient-rank condition established in Proposition 1. The diversity training objective further separates graphs containing noise islands from structurally coherent graphs, as isolated nodes contribute uninformative signal that reduces structural similarity across co-retrieved passage graphs\cite{gasteiger2018predict}.

Cross-domain distractors (e.g., keyword-matched homonyms) act as topologically isolated nodes. Graph convolutions mathematically dampen these ungrounded islands while exponentially amplifying well-connected factual backbones, guaranteeing robust noise resistance.


\subsection{Proof of Lemma 3}
\begin{lemma}
\label{lemma3}
 Shared bridge entities elevate cross-graph similarity through relational enrichment, without requiring a cross-document edge.
\end{lemma}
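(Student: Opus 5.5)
We argue with a single layer, matching the $L=1$ instantiation. Let $b$ be a bridge entity appearing as a node $b_i \in V_i$ and $b_j \in V_j$, with no edge between $G_i$ and $G_j$. Since $b_i$ and $b_j$ are the same lemmatized entity with the same POS and entity type, their input features nearly coincide: $x_{b_i} \approx x_{b_j}$, and hence $h_{b_i}^{(0)} \approx h_{b_j}^{(0)}$ under the shared projection $\bW_{\text{proj}}$. We write the one-layer update of $b_i$ as
\[
h_{b_i}^{(1)} = \sigma\Big( \bW_0 h_{b_i}^{(0)} + \sum_{r \in \mathcal{R}} \bW_r \sum_{j' \in \mathcal{N}_{b_i}^r} h_{j'}^{(0)} \Big),
\]
and analogously for $b_j$. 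The self-loop terms are then (approximately) identical across the two graphs. The relational terms are the enrichment: they depend only on each graph's own 1-hop neighborhood of $b$. This is why no cross-document edge is needed. The shared weights $\bW_r$ act as a common coordinate system, so the same relation type incident to $b$ in both graphs is pushed into the same channel.

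Next we decompose the pooled embeddings. We split $\Phi(G_i) = h_{b_i}^{(1)} + \sum_{u \neq b_i} h_u^{(1)} =: A_i + R_i$, and similarly $\Phi(G_j) = A_j + R_j$, and expand the inner product $\langle \Phi(G_i), \Phi(G_j)\rangle$. Lemma~\ref{lemma1} governs the lexical baseline, namely the overlap of the non-bridge remainders $R_i, R_j$. The new contribution is $\langle A_i, A_j\rangle$, together with the cross terms $\langle A_i, R_j\rangle$ and $\langle R_i, A_j\rangle$. We aim to show that these exceed what the raw features $x_{b_i}, x_{b_j}$ alone would contribute.

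Two facts drive this. First, $b$ is a connected backbone node, so by the norm bound of Lemma~\ref{lemma2} its embedding is amplified by a factor $d \cdot \min_r \sigma_{\min}(\bW_r)$. It therefore carries a larger share of each pooled vector than an isolated shared token would. The min-out-degree reweighting by $\beta > 1$ boosts this further whenever $b$ is a terminal anchor. Second, suppose the relation types incident to $b$ overlap across the two graphs, which is the ``compatible relation types'' hypothesis. Then the relational sums share components along the ranges of the same matrices $\bW_r$. Because ReLU acts coordinatewise and preserves nonnegativity, the aligned components cannot cancel. This makes $\langle A_i, A_j\rangle$ at least the aligned self-loop term plus a nonnegative relational correlation term. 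The cosine therefore rises above its value for an unenriched shared node, which is precisely the notion of ``elevated beyond raw lexical overlap.''

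The main obstacle is turning ``elevates similarity'' into a genuine inequality on the cosine, since normalization can in principle dilute the gain. We would first try a comparison argument. We compare $G_i, G_j$ against the modified graphs in which $b$'s incident edges are deleted, so that $b$ becomes isolated and only its self-loop survives. Under the assumption that the neighbor aggregates of $b$ in the two graphs have positive inner product after $\bW_r$ and ReLU, we show that the numerator grows by at least the product of the enriched norms. Nonnegativity of ReLU outputs bounds the growth of the norms in the denominator, so the ratio increases. The positivity assumption will probably have to be stated explicitly as a hypothesis on compatible neighborhoods, rather than derived.
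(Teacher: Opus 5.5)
Your overall route is the paper's: near-identical $h_e^{(0)}$ via Lemma~\ref{lemma1}, shared $\bW_r$, enrichment computed inside each graph's own 1-hop neighborhood, and a bridge-anchored term present in both sum-pooled vectors, so no cross-document edge is needed. The paper stops at that point. It asserts only that the two anchored terms are non-orthogonal and hence contribute positively, and it handles the contrast cases (no shared node; shared but isolated node) by appeal to Lemmas~\ref{lemma1} and~\ref{lemma2}.

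The gap is in the extra steps you add to upgrade this to a strict cosine inequality against the graphs with $b$'s edges deleted. First, the bound ``$\langle A_i,A_j\rangle \ge$ self-loop term $+$ nonnegative relational term'' does not follow from ReLU acting coordinatewise. $\sigma$ is applied to the sum $s+\rho_i$ of the self-loop pre-activation $s=\bW_0 h_b^{(0)}$ and the relational pre-activation $\rho_i$. A negative coordinate of $\rho_i$ can therefore switch off a coordinate that $s$ alone activates. For instance, $s=(2,0)$ with identical, maximally compatible neighborhoods $\rho_i=\rho_j=(-3,1)$ gives $\langle A_i,A_j\rangle = 1 < 4 = \|\sigma(s)\|^2$. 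More fundamentally, when $b$ is isolated its two anchored terms are (nearly) parallel, both close to multiples of $\sigma(s)$. Enrichment by two \emph{different} neighborhoods, which is exactly what makes $b$ a bridge, can only rotate them apart. What enrichment may buy is norm, i.e.\ a larger share of each pooled vector (your Lemma~\ref{lemma2} point). But a larger share pushes $\cos(\Phi(G_i),\Phi(G_j))$ toward $\cos(A_i,A_j)$, which your argument never bounds from below.

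Second, the comparison step fails as stated. Nonnegativity of ReLU outputs gives no control of the denominator. The cosine increases if and only if the gain in $\langle \Phi(G_i),\Phi(G_j)\rangle$ exceeds the current cosine times the gain in $\|\Phi(G_i)\|\,\|\Phi(G_j)\|$, and nonnegative increments can violate this. Starting from $u=v=e_1$, adding $e_2$ to $u$ and $e_2+e_3$ to $v$ doubles the inner product but lowers the cosine from $1$ to $2/\sqrt{6}$. Two further problems:
\begin{itemize}
\item The claim that the numerator grows by at least the product of the enriched norms runs against Cauchy--Schwarz unless $A_i \parallel A_j$.
\item Deleting $b$'s edges removes the messages $b$ sends to its neighbors, so $R_i$ and $R_j$ are not held fixed.
\end{itemize}
Your fallback hypothesis, a positive inner product of the aggregates after ReLU, is nearly vacuous. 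Nonnegative vectors always have nonnegative inner product, so it only excludes disjoint supports. What is actually needed is quantitative. For example, assume nonnegative pre-activations on the support of $b$'s term, so that ReLU is linear there and your decomposition becomes exact, together with the cosine-gain condition above. Either state such hypotheses explicitly, or settle for the paper's weaker conclusion: a positive bridge-anchored contribution relative to pairs lacking the shared node. The paper itself never converts that contribution into a cosine inequality, so the normalization obstacle you identified is real and is glossed over there.
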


\paragraph{Proof:}
Let $e$ be a bridge entity realized as a node in both $D_1$'s graph $G_1=(V_1,E_1)$ and $D_2$'s graph $G_2=(V_2,E_2)$ (i.e.\ the same canonical lemma/entity after coreference resolution, per Section~4.1). Within $G_1$, $e$ participates in relational edges $(r, e, \cdot)$ reflecting its role in $D_1$'s factual assertions; by the R-GCN update, its layer-1 embedding
  \[
  h_e^{(1)} = \sigma\!\left(\sum_{r \in \mathcal{R}} \bW_r^{(1)} \sum_{j \in \mathcal{N}_e^r} \frac{1}{c_{e,r}} h_j^{(0)} + \bW_0^{(1)}
  h_e^{(0)}\right)
  \]
aggregates $D_1$'s local relational context around $e$, rather than reflecting only $e$'s raw input feature $x_e = h_e^{(0)}$. The analogous embedding computed inside $G_2$ aggregates $D_2$'s independent local context around the same node identity.

Because $\Phi(G_1) = \sum_{u \in V_1} h_u^{(1)}$ and $\Phi(G_2) = \sum_{v \in V_2} h_v^{(1)}$ are sums over all nodes in each graph, both
sums include a term anchored at $e$. By Lemma~\ref{lemma1}, the shared identity of $e$ guarantees $h_e^{(0)}$ is near-identical across $G_1$ and $G_2$ prior to relational transformation; since $\bW_r^{(1)}$ is applied identically regardless of which graph $e$ sits in, the two enriched terms remain non-orthogonal after aggregation. This produces a strictly positive contribution to $\cos(\Phi(G_1), \Phi(G_2))$ anchored specifically at $e$, one that does not depend on $D_1$ and $D_2$ sharing any other vocabulary.

By contrast, a passage pair sharing no entity node reduces to the generic overlap case of Lemma~\ref{lemma1} with no such anchored term, and a passage pair sharing only topologically isolated, unenriched nodes reduces to the noise case of Lemma~\ref{lemma2}. We therefore interpret informational bonding as an emergent property of shared entity nodes being independently relationally enriched within each passage's own 1-hop neighborhood, consistent with our per-passage graph construction and pairwise cosine scoring, bonding does not require, and our pipeline does not construct, an explicit edge crossing the document boundary.






\subsection{Proof of Lemma 4}
\begin{lemma}
\label{lemma4}
Relational contradictions that alter relation type or ordering along a shared reasoning path are penalized through non-commutative operators.
\end{lemma}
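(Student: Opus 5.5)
The plan is to compare two passage graphs $G_1$ and $G_2$ that share a reasoning path through the same canonical entities. The path runs $u \xrightarrow{r_1} m \xrightarrow{r_2} w$. In $G_2$ the contradiction is modeled in one of two ways. Either one relation type is altered, giving $r_2' \neq r_2$. Or the order along the path is altered: the composition $r_2 \circ r_1$ is replaced by $r_1 \circ r_2$, or an edge direction is reversed. We then track how the signal from $h_u^{(0)}$ reaches $w$. In the linear regime of $\sigma$, which ReLU gives on the positive orthant where these activations live, the contribution of $u$ to $h_w^{(2)}$ is $\bW_{r_2}^{(2)} \bW_{r_1}^{(1)} h_u^{(0)}$, up to the normalizations $1/c_{\cdot,r}$. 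The contradicted graph instead yields $\bW_{r_2'}^{(2)} \bW_{r_1}^{(1)} h_u^{(0)}$ or $\bW_{r_1}^{(2)} \bW_{r_2}^{(1)} h_u^{(0)}$. By Lemma~\ref{lemma1}, $h_u^{(0)}$ is near-identical in both graphs because the node identity is shared. So the entire difference between the anchored terms comes from the operator products.

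The central step is to show these products differ on the relevant inputs. For a type change, the relation-specific matrices are distinct and of sufficient rank (Proposition~\ref{prop}), so generically $(\bW_{r_2'}-\bW_{r_2})\bW_{r_1}h_u^{(0)} \neq 0$. For a reordering, the argument rests on non-commutativity: for generic full-rank matrices, $[\bW_{r_2},\bW_{r_1}] \neq 0$. The set of pairs that commute on a given vector is a proper algebraic subset of measure zero. Hence $\bW_{r_2}\bW_{r_1}h_u^{(0)} \neq \bW_{r_1}\bW_{r_2}h_u^{(0)}$ for almost every initialization.

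Next we lift this discrepancy to the graph level. By Proposition~\ref{prop}, the changed $L$-hop relational neighborhood of $w$, and of the other nodes on the path, gives a distinct embedding. Sum pooling then propagates this as a nonzero difference $\Phi(G_1)-\Phi(G_2)$. The remaining requirement is that the change actually lowers $\cos(\Phi(G_1),\Phi(G_k))$ relative to the coherent pair, which is the form needed in Theorem~\ref{thm1}. For this we use the diversity objective of Section~\ref{sec:4.3}. It pushes distinct relational patterns toward separated directions, so we model the contradicted path term as having a component roughly orthogonal to, or negatively aligned with, the coherent one. The shared-entity terms from Lemma~\ref{lemma3} stay common to both pairs. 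Writing $\Phi = A + P$, with shared part $A$ and path part $P$, we compare $\langle A+P_1, A+P_2\rangle$ against $\langle A+P_1, A+P_1'\rangle$ while controlling norms through the singular-value bound of Lemma~\ref{lemma2}.

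I expect the main obstacle to be this final step, turning "different" into "strictly less cosine-similar." Injectivity alone gives no sign information. A single altered edge may also move the pooled vector only slightly when $A$ dominates, which is exactly the near-isomorphic RAMDocs case the paper concedes. My first approach would be to state the result conditionally: the penalty is strict whenever the path term's norm is comparable to the shared term. The difference would be lower-bounded by $\sigma_{\min}(\bW_{r_2}\bW_{r_1}-\bW_{r_1}\bW_{r_2})\,\|h_u^{(0)}\|$, and the needed angular separation would be taken as an assumption delivered by the diversity regularizer rather than derived.
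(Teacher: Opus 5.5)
Your proposal follows the paper's own argument: the paper likewise compares the path operators $\bW_{r_2}\bW_{r_1}$ and $\bW_{r_1}\bW_{r_2}$ and concludes from non-commutativity only that $\|\Phi(G_{\mathrm{consistent}})-\Phi(G_{\mathrm{contradictory}})\|>\epsilon$. Exactly as you do, it then relies on the training objective to turn that divergence into a similarity penalty (it says the lemma holds only at convergence of a well-trained model), and it likewise sets aside the near-isomorphic RAMDocs case, which it attributes to leaf-only entity substitutions that leave the operator sequence unchanged. Your genericity argument, the separate relation-type case, and the explicitly conditional cosine statement refine that same route rather than replace it; one small fix is that nonnegative ReLU outputs do not put the pre-activations $\bW_{r_2}h_m^{(1)}$ in the linear regime, though the generic inequality of the two path contributions survives without linearization.
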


\paragraph{Proof:}
Standard graph models treat edges as scalars, making the logic $A \to B \to C$ identical to $C \to B \to A$. In an R-GCN, the relationship is a linear operator $\bW_r$.

Consider a factual path $P_1 = (r_1, r_2)$. The transformation is $\bW_{r_2} \bW_{r_1}$. If a retrieved document presents a contradiction $P_2 = (r_2, r_1)$, the transformation is $\bW_{r_1} \bW_{r_2}$. Since matrix multiplication is \textbf{non-commutative} ($\bW_{r_1} \bW_{r_2} \neq \bW_{r_2} \bW_{r_1}$), the resulting node embeddings $h^{(L)}$ will diverge in the manifold $\mathcal{E}$.

  \[
  \| \Phi(G_{consistent}) - \Phi(G_{contradictory}) \| > \epsilon
  \]

This divergence ensures that the similarity score $\mathcal{S}$ is maximized only when the relational logic of the document set aligns perfectly with the query's expected logical flow, thereby enforcing internal factual harmony \cite{schlichtkrull2018modeling}. Note that Lemma~\ref{lemma4} holds at convergence of a well-trained model, and we reference the contrastive objective as the mechanism that drives $\bW_r$ matrices to be relation-discriminative.

This argument applies to contradictions that manifest as a change in relation \emph{type or ordering} along a path $(r_1, r_2)$, e.g., a reversed causal or temporal relation. It does not cover contradictions expressed as a single-entity substitution within an otherwise \emph{identical} relational structure (e.g., replacing only a location argument while every surrounding relation type and edge direction is preserved): such an edit changes only the leaf node $h_{j_p}^{(0)}$ at the end of a path, not the operator sequence $\prod_k \bW_{r_k}$, so the resulting divergence is bounded by the difference between the two leaf embeddings rather than by non-commutativity, and can be small when the substituted entities are lexically or semantically similar. This is precisely the assertion-level factual shift underlying our RAMDocs failure analysis (Section~6.3): we attribute that result to this boundary condition of Lemma~\ref{lemma4} rather than to a general architectural failure, motivating the complementary claim-level (e.g., NLI) verification we identify as future work.

Relation matrices act as direction and type sensitive logical operators. Just as swapping subject and object alters factual meaning in natural language, matrix non-commutativity ensures that reversed or incompatible relation types map to divergent embeddings, penalizing that class of factual inconsistency.

\section{Prompts}
\label{app:prompt}

\begin{figure*}[h]
    \centering
    \includegraphics[width=0.8\linewidth]{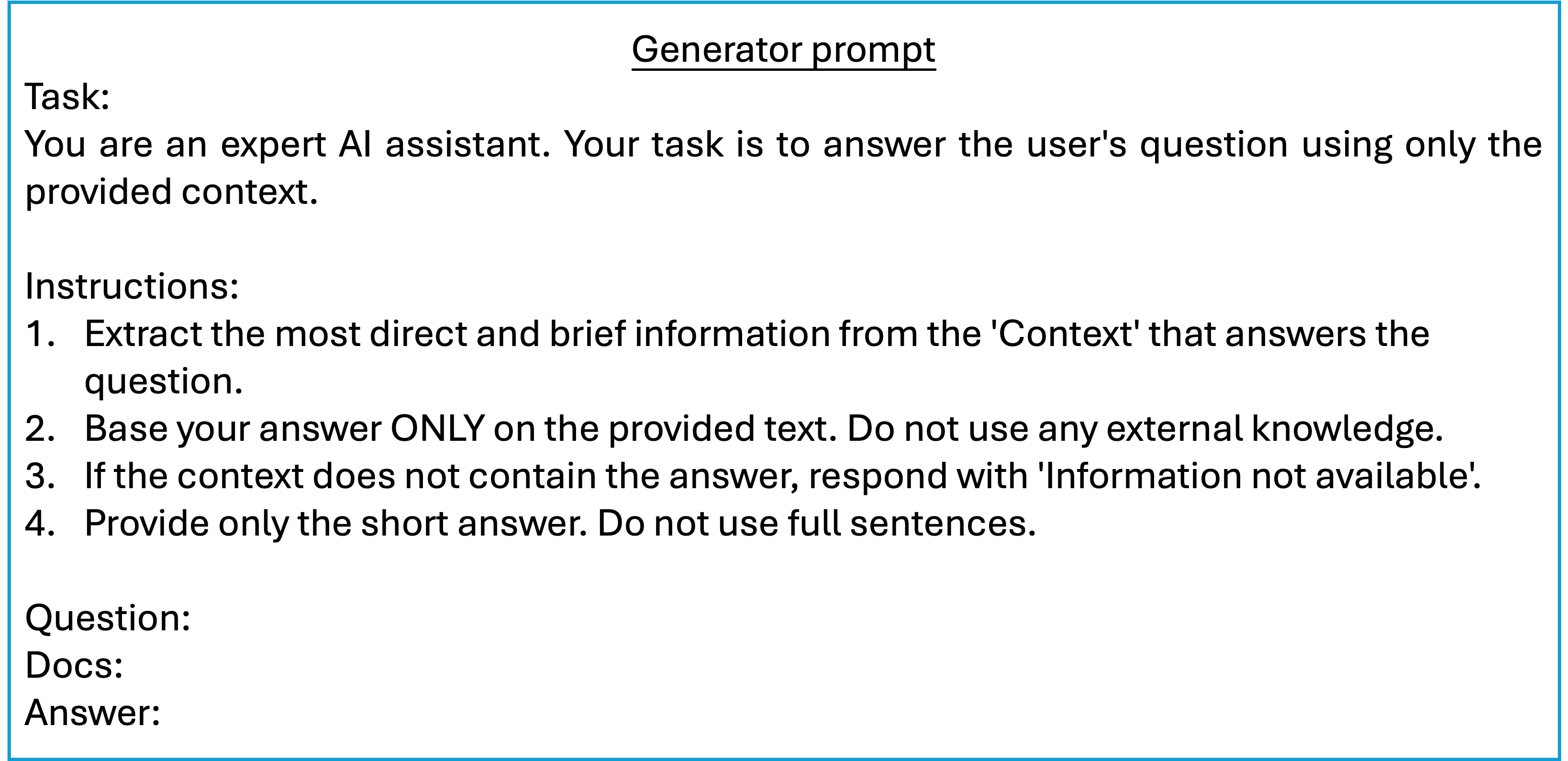}
    \caption{Prompt for Generator}
    \label{fig:generator}
\end{figure*}

\section{Figures}
\label{app:figs}

Figure \ref{fig:alpha} illustrates the sensitivity of Recall@5 to the fusion weight $\alpha$ across all four evaluation benchmarks.

\begin{figure*}[h]
    \centering
    \includegraphics[width=0.8\textwidth]{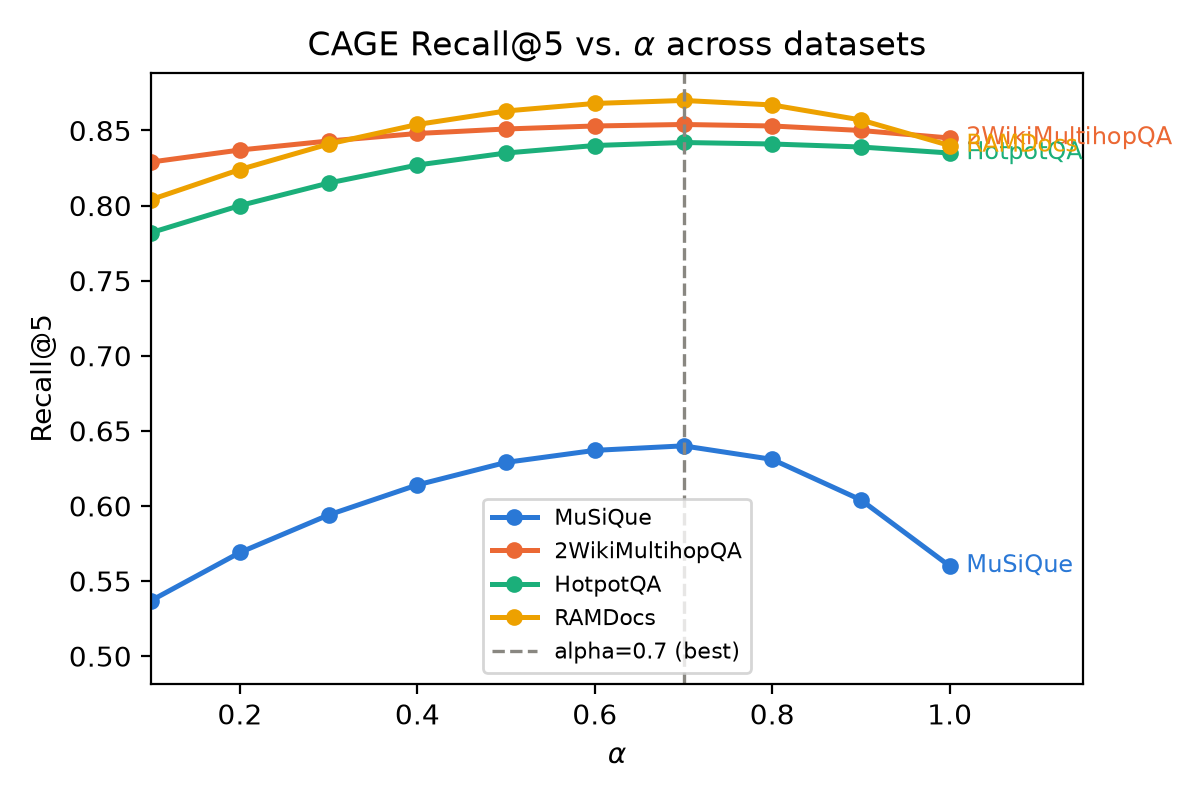}
    \caption{Sensitivity of Recall@5 to the fusion hyperparameter $\alpha$ across four benchmarks.}
    \label{fig:alpha}
\end{figure*}

\section{Examples for Each Dimension}
\label{app:example}

To provide qualitative intuition for our theoretical framework, this section details case studies mapping each benchmark dataset to its corresponding between-chunk coherence dimension (Figures \ref{fig:dim1example}–\ref{fig:dim4example}).

\begin{figure*}[h]
    \centering
    \includegraphics[width=0.8\textwidth]{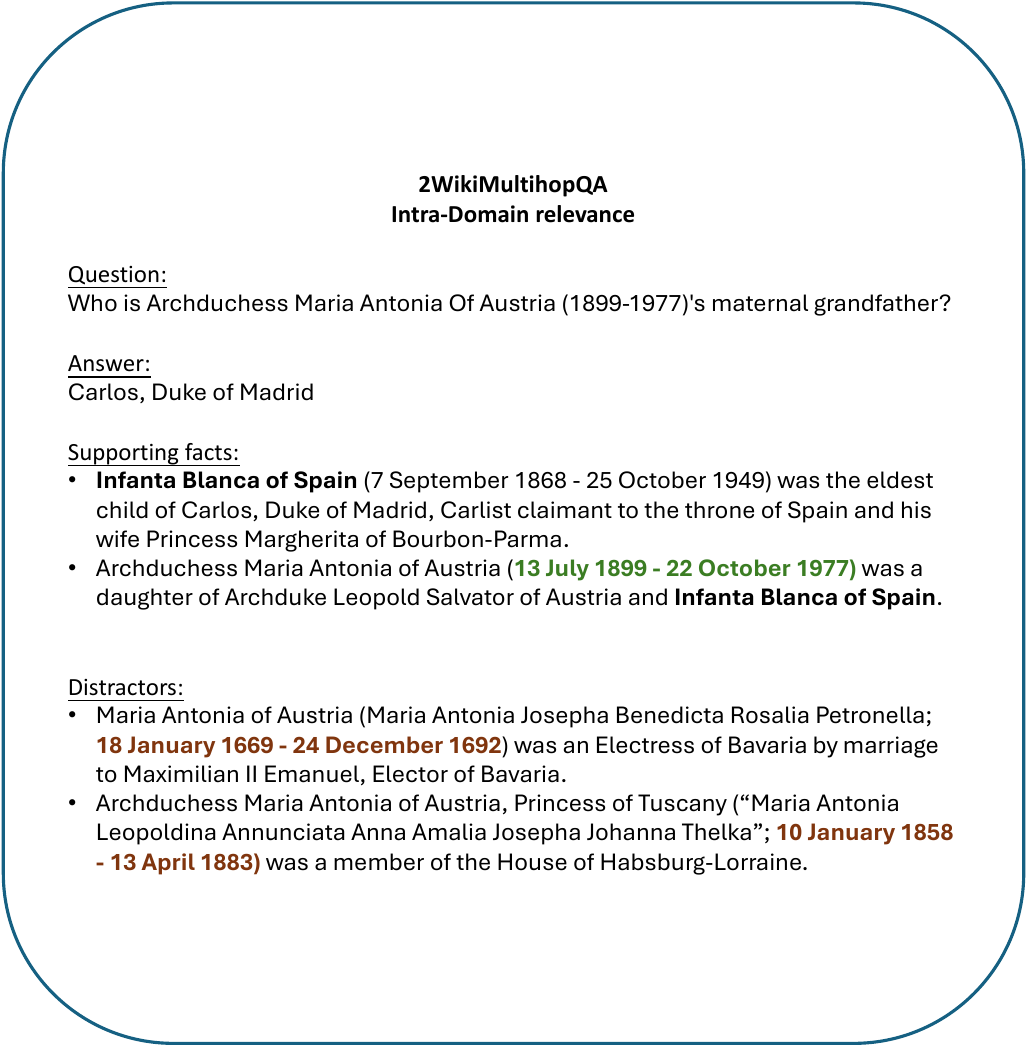}
    \caption{Case Study of Intra-Domain Relevance. \\
    This example illustrates the challenge of entity disambiguation. The query seeks the "maternal grandfather" of a specific Maria Antonia of Austria (1899–1977). While distractors share the identical name, they refer to different individuals from the 17th and 19th centuries. Furthermore, the term "grandfather" does not appear in the supporting facts; instead, the text only mentions her mother. 
    }
    \label{fig:dim1example}
\end{figure*}

\begin{figure*}[h]
    \centering
    \includegraphics[width=0.8\textwidth]{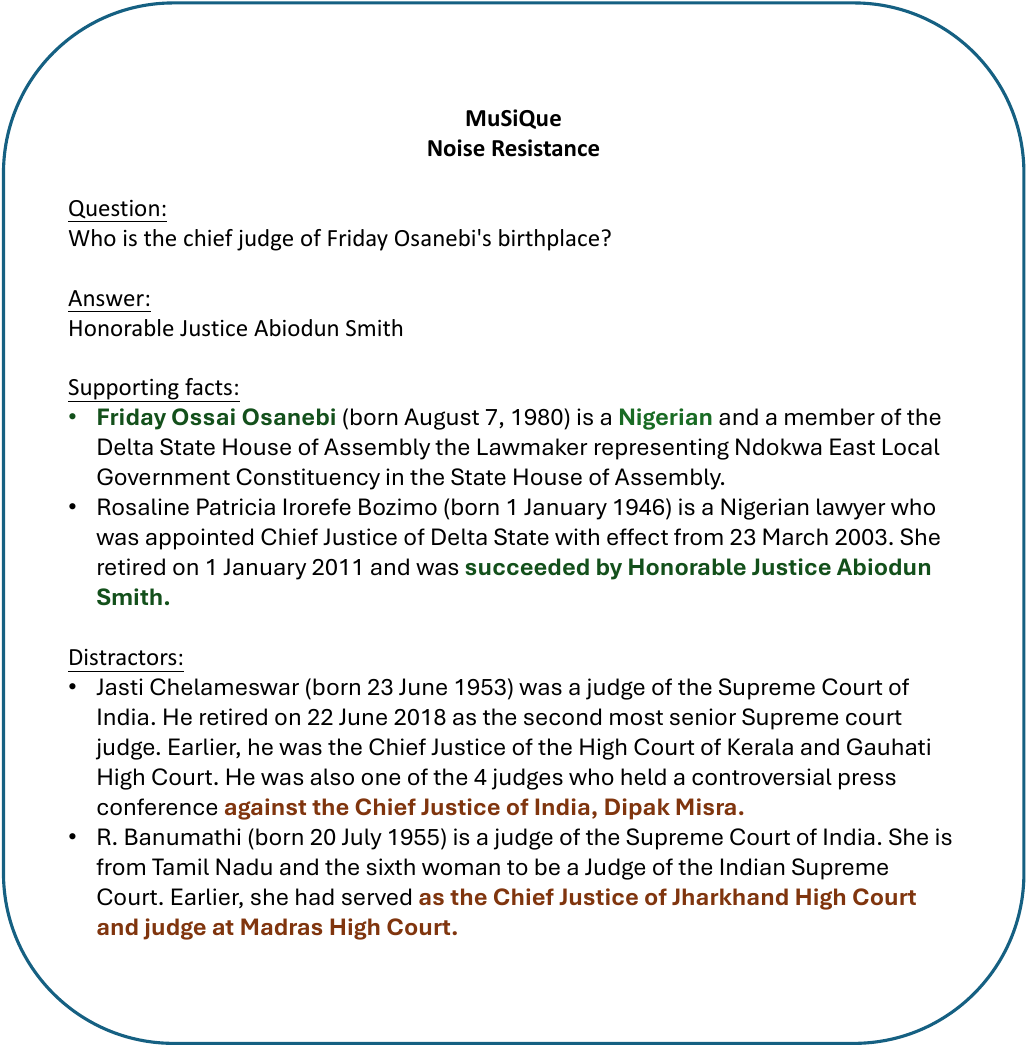}
    \caption{Case Study of Noise Resistance. \\
    This MuSiQue example demonstrates the challenge of contextual noise filtering. The query asks for the chief judge of Friday Osanebi's birthplace. While the distractors contain highly relevant professional terms, "Chief Justice," "Supreme Court," and "High Court"—they refer to the legal systems of India (Kerala, Jharkhand).
    }
    \label{fig:dim2example}
\end{figure*}

\begin{figure*}[h]
    \centering
    \includegraphics[width=0.8\textwidth]{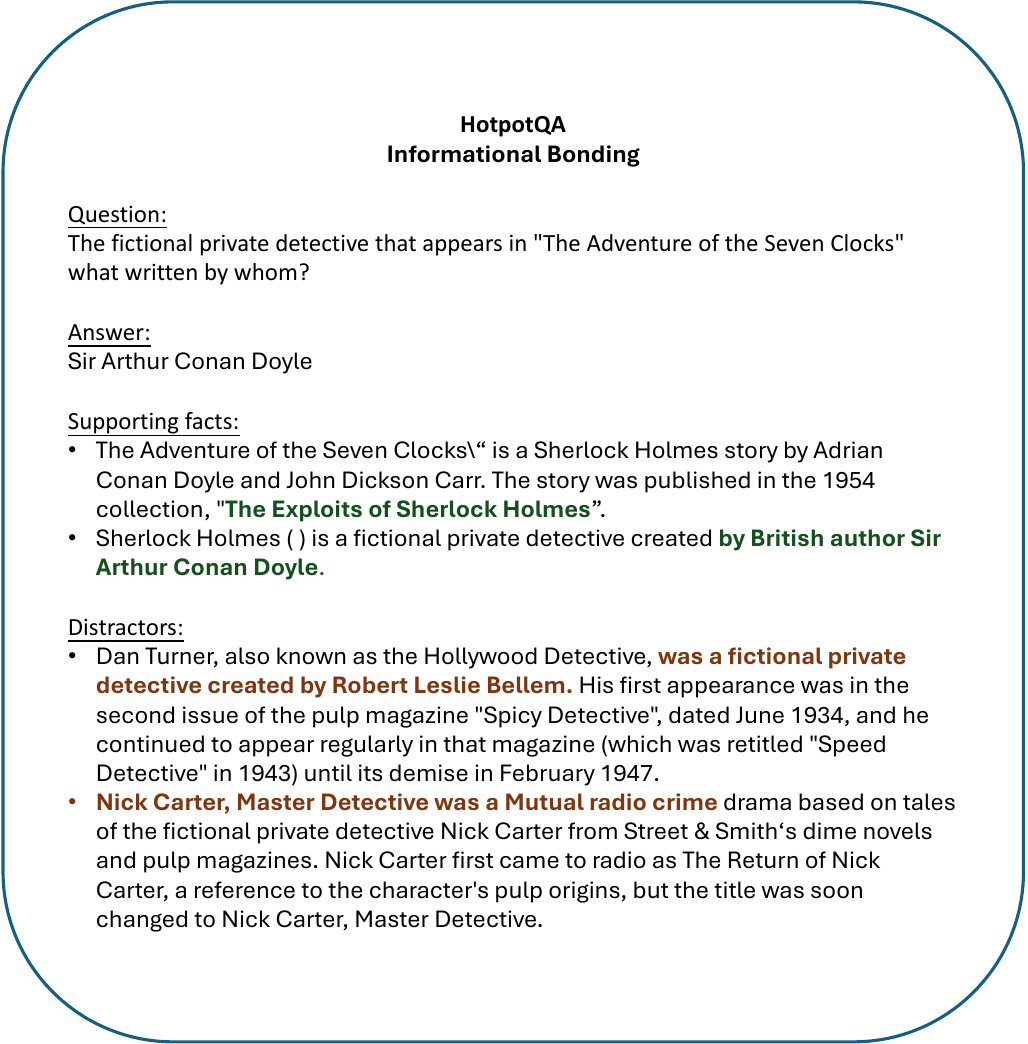}
    \caption{Case Study of Informational Bonding. \\
    This HotpotQA sample illustrates a multi-hop reasoning "bridge." The query asks for the creator of a detective who appears in a specific story (The Adventure of the Seven Clocks). To answer the question, it needs to first identify that the detective in the story is Sherlock Holmes, and then "bridging" to the second document to find that Holmes was created by Sir Arthur Conan Doyle.
    }
    \label{fig:dim3example}
\end{figure*}

\begin{figure*}[h]
    \centering
    \includegraphics[width=0.8\textwidth]{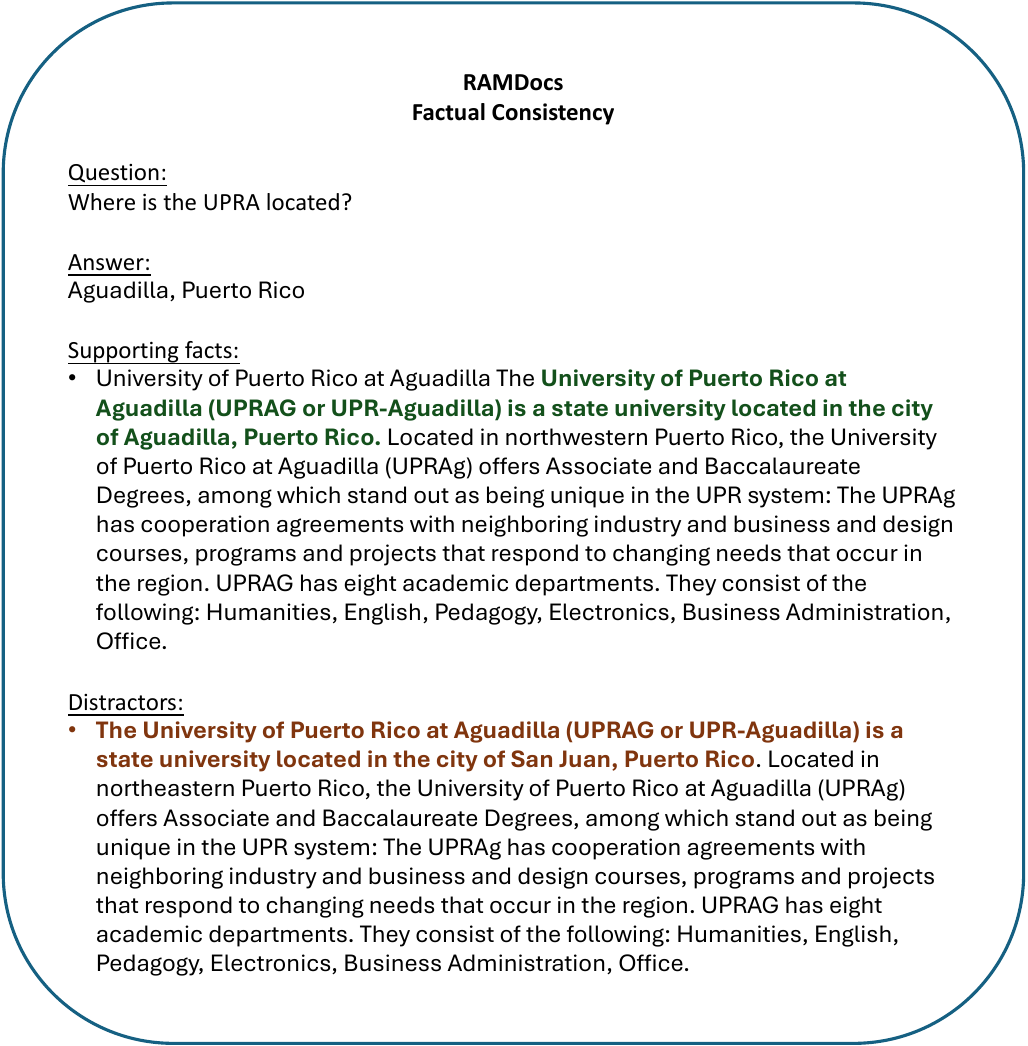}
    \caption{Case Study of Factual Consistency. \\
    This sample presents an adversarial conflict where two retrieved chunks provide contradictory locations for the "UPRA" university. Both documents share identical phrasing regarding departments and degrees, but the Supporting Fact places the campus in Aguadilla (Northwestern PR), while the Distractor (misinformation) places it in San Juan (Northeastern PR).
    }
    \label{fig:dim4example}
\end{figure*}

\section{Table for Ablation Study}
\label{appd:table}
\begin{table*}[h]
\centering
\small 
\setlength{\tabcolsep}{3pt} 
\begin{tabular}{lcccccccccccccccc}
\toprule
\multirow{2}{*}{\textbf{Model}} & \multicolumn{8}{c}{\textbf{Gemma 4 26B A4B}} & \multicolumn{8}{c}{\textbf{Llama 3.3 70B}} \\

  & \multicolumn{2}{c}{\textbf{MuSiQue}} & \multicolumn{2}{c}{\textbf{2Wiki}} & \multicolumn{2}{c}{\textbf{HotpotQA}} & \multicolumn{2}{c}{\textbf{RAMDocs}} &
 \multicolumn{2}{c}{\textbf{MuSiQue}} & \multicolumn{2}{c}{\textbf{2Wiki}} & \multicolumn{2}{c}{\textbf{HotpotQA}} & \multicolumn{2}{c}{\textbf{RAMDocs}}
 \\ 

 \cmidrule(lr){2-3} \cmidrule(lr){4-5} \cmidrule(lr){6-7} \cmidrule(lr){8-9} \cmidrule(lr){10-11} \cmidrule(lr){12-13} \cmidrule(lr){14-15} \cmidrule(lr){16-17}
 \textbf{Retriever} & EM & F1 & EM & F1 & EM & F1 & EM & F1 & EM & F1 & EM & F1 & EM & F1 & EM & F1 \\ \midrule
GCN encoder & 9.5 & 10.6  & 33.3 & 33.6& 42.4 & 42.8 & 43.2 &58.8 & 21.5 & 24.1 & 45.4 & 47.7 & 49.9 & 53.1 & 42.2 & 65.3\\
CAGE(no reweighting) & 15.5 & 16.6& 37.1 & 38.1 & 50.1 & 50.3 & 43.3 & 58.9 & 28.4 & 31.9 & 52.5 & 50.7 &54.3 &57.4 & 44.2& 65.5\\
CAGE & \underline{17.1} & \underline{18.0} &  \underline{40.1}& \underline{40.2} & \underline{50.7} & \underline{50.8} & \underline{44.7} & \underline{59.3} & \underline{30.4} & \underline{33.1} & \underline{54.3} & \underline{56.9}  & \underline{55.0} & \underline{58.4} &  \underline{44.7}&\underline{65.8} \\


\bottomrule
\end{tabular}
\caption{Downstream generation ablation study across Gemma 4 26B A4B and Llama 3.3 70B generators. Evaluates Exact Match (EM) and F1 metrics across all four multi-hop datasets to isolate component contributions.}
\label{tab:ablation2}
\end{table*}

\section{License}
All datasets and models used in this work are employed in accordance with their intended research use.
\paragraph{Datasets.}
  HotpotQA \citep{yang2018hotpotqa} is released under CC BY-SA 4.0.
  2WikiMultihopQA \citep{ho2020constructing} is released under Apache 2.0.
  MuSiQue \citep{trivedi2022musique} is released under CC BY 4.0.
  RAMDocs \citep{wang2025retrieval} is released under MIT.

\paragraph{Models and Libraries.}
  all-MiniLM-L6-v2 \citep{reimers2019sentence} is released under Apache 2.0.
  SpaCy and \texttt{en\_core\_web\_lg} are released under MIT.
  GTR \citep{ni2022large} is released under Apache 2.0.
  ColBERTv2 \citep{santhanam2022colbertv2} is released under MIT.
  monoT5 \citep{nogueira2020document} is released under Apache 2.0.

  All artifacts are used for research purposes consistent with their respective licenses.

\section{Computational Budget}                                                                               
All experiments were conducted on a compute node equipped with 8 NVIDIA A100 GPUs. The CAGE pipeline's primary computational cost is graph construction: SpaCy \texttt{en\_core\_web\_lg} with coreference
  resolution processes approximately 10,000 passages across all four benchmarks in roughly 6 hours on CPU. R-GCN training is
   performed per query on $k{=}5$ passage graphs for 50 epochs (Adam, lr=$1 \times 10^{-3}$), with each instance completing
  in under 0.5 seconds. Total R-GCN training across all queries requires approximately 0.5 GPU-hours on a
  single A100. Including passage encoding and all baseline model inference (monoT5, ColBERTv2, GTR), the total
  computational budget is approximately 12 GPU-hours. The 8-GPU setup was used to parallelize experiments across datasets
  and baselines simultaneously; the CAGE pipeline itself requires only a single GPU due to its lightweight architecture.

\end{document}